
\documentclass[11pt,3p,final,authoryear]{elsarticle}

\usepackage{natbib}


\usepackage[table]{xcolor} 
\usepackage{hyperref}
\usepackage{url}

\usepackage{amssymb,amsmath,amsthm}
\theoremstyle{plain}
\newtheorem{theorem}{Theorem}[section]

\newtheorem{lemma}[theorem]{Lemma}

\theoremstyle{definition}
\newtheorem{definition}[theorem]{Definition}

\theoremstyle{remark}

\usepackage{amsfonts}
\usepackage{mathtools} 
\usepackage[ruled,vlined]{algorithm2e}
\usepackage{graphicx, subfigure}
\usepackage{mwe}

\usepackage{booktabs}
\usepackage{multirow}
\usepackage{multicol}
\usepackage{makecell}

\usepackage{lipsum}
\usepackage{dirtytalk}

\usepackage{caption} \captionsetup[table]{skip=4pt}

\newcommand{\mathbbm}[1]{\text{\usefont{U}{bbm}{m}{n}#1}}
\newcommand{\model}[1]{\texttt{#1}}
\newcommand{\NaiveBU}{\model{NaiveBU}}
\newcommand{\GroupBU}{\model{GroupBU}}

\newcommand{\TD}{\model{TD}}
\newcommand{\MO}{\model{MO}}
\newcommand{\ERM}{\model{ERM}}
\newcommand{\PERMBU}{\model{PERMBU}}
\newcommand{\MinT}{\model{MinT}}
\newcommand{\SHARQ}{\model{SHARQ}}
\newcommand{\HIRED}{\model{HIRED}}
\newcommand{\HierEtoE}{\model{HierE2E}}
\newcommand{\PROFHiT}{\model{PROFHiT}}

\newcommand{\ARIMA}{\model{ARIMA}}

\newcommand{\GLMPoisson}{\model{GLM-Poisson}}
\newcommand{\SNaive}{\model{SeasonalNaive}}
\newcommand{\Naive}{\model{Naive1}}

\newcommand{\ADAM}{\model{ADAM}}

\newcommand{\GTOP}{\model{GTOP}}

\newcommand{\MQForecaster}{\model{MQ-Forecaster}}
\newcommand{\MLP}{\model{MLP}}

\newcommand{\TempConv}{\model{TempConv}}

\newcommand{\MDN}{\model{MDN}}

\newcommand{\SeqtoSeqC}{\model{Seq2SeqC}}

\newcommand{\ours}{\model{DPMN}}
\newcommand{\ourscomplete}{\emph{Deep Poisson Mixture Network}}
\newcommand{\PMM}{\model{PMM}}

\newcommand{\GluonTS}{\model{GluonTS}}

\newcommand{\MXNet}{\model{MXNet}}
\newcommand{\HYPEROPT}{\model{HYPEROPT}}

\newcommand{\TourismLgains}{{11.8\%}}
\newcommand{\Favoritagains}{{8.1\%}}



\newcommand{\dataset}[1]{\texttt{#1}}

\newcommand{\TourismL}{\dataset{Tourism-L}}

\newcommand{\Traffic}{\dataset{Traffic}}
\newcommand{\Favorita}{\dataset{Favorita}}



\long\def\EDIT#1{{\color{black}{#1}\color{black}}}
\long\def\EDITtwo#1{{\color{black}{#1}\color{black}}}
\long\def\EDITthree#1{{\color{black}{#1}\color{black}}}

\newcommand\blambda{\boldsymbol \lambda}
\newcommand\btheta{\boldsymbol \theta}

\journal{Preprint submitted to International Journal of Forecasting}
\begin{document}
\begin{frontmatter}

\title{Probabilistic Hierarchical Forecasting with Deep Poisson Mixtures}

\affiliation[inst1]{organization={Auton Lab, School of Computer Science, Carnegie Mellon University},
            city={Pittsburgh},
            country={PA}}

\affiliation[inst2]{organization={Forecasting Science, Amazon},
            city={New York},
            country={NY}}

\author[inst1]{Kin G. Olivares}

\author[inst2]{O. Nganba Meetei*}
\ead{meeteio@amazon.com}
\cortext[cor1]{Corresponding author}

\author[inst2]{Ruijun Ma}

\author[inst2]{Rohan Reddy}

\author[inst2]{\\Mengfei Cao}

\author[inst2]{Lee Dicker}

\begin{abstract}
Hierarchical forecasting problems arise when time series have a natural group structure, and predictions at multiple levels of aggregation and disaggregation across the groups are needed.
In such problems, it is often desired to satisfy the aggregation constraints in a given hierarchy, referred to as hierarchical coherence in the literature.
Maintaining coherence while producing accurate forecasts can be a challenging problem, especially in the case of probabilistic forecasting.
We present a novel method capable of
accurate and coherent probabilistic forecasts for time series when reliable hierarchical information is present. 
We call it Deep Poisson Mixture Network (DPMN). It relies on the combination of neural networks and a statistical model for the joint distribution of the hierarchical multivariate time series structure. 
By construction, the model guarantees hierarchical coherence and provides simple rules for aggregation and disaggregation of the predictive distributions. We perform an extensive empirical evaluation comparing the DPMN to other state-of-the-art methods which produce hierarchically coherent probabilistic forecasts on multiple public datasets. \EDITthree{Comparing} to existing coherent probabilistic models, we \EDITthree{obtain} a relative improvement in the overall Continuous Ranked Probability Score (CRPS) of \TourismLgains\ on Australian domestic tourism data, and \Favoritagains\  on the Favorita grocery sales dataset, where time series are grouped with geographical hierarchies or travel intent hierarchies. For San Francisco Bay Area highway traffic, where the series' hierarchical structure is randomly assigned, and their correlations are less informative, our method does not show significant performance differences over statistical baselines.
\end{abstract}

\begin{keyword}
Hierarchical Forecasting \sep Probabilistic Coherence \sep Neural Networks \sep Poisson Mixtures
\end{keyword}

\end{frontmatter}

\section{Introduction and Motivation} \label{section:introduction}
We study the task of \EDIT{probabilistic coherent} time series forecasting where users need predictive distributions for all related time series organized into a hierarchy or group structure 
\EDITtwo{\citep{hyndman2016hierarchical_groupedfast, athanasopoulos2017hierarchical_temporal, spiliotis2020hierarchical_cross,panagiotelis2023probabilistic_reconciliation,taieb2017coherent_prob_forecasts}}. As \EDITtwo{forecasts} for different aggregation levels drive different decisions, forecast coherence 
is desired to ensure aligned decision-making across the hierarchies \citep{petropoulos2021forecasting}. Notable examples of hierarchical forecasting tasks include the necessity from energy planners to synchronize the electricity load at each level of the grid with total production \citep{taieb2019hierarchical_regularized_regression, jeon2019coherent_quantile_forecasts}, the short-term load category in the Global Energy Forecasting Competition 2012 (GEFCOM2012; \citealt{hong2014global}), and the efforts from the forecasting community manifested at the fifth Makridakis Competition (M5; \citealt{makridakis2020m5competition_results}).


Coherent forecasts are defined as those that satisfy the aggregation constraints of the hierarchy. That is, dis-aggregated \EDITtwo{forecasts} \say{add up} to the \EDITtwo{forecasts} of aggregate levels. This definition is accessible for mean forecasts, which are additive \EDITtwo{by linearity of the expectation}. 
For probabilistic forecasts, \EDITtwo{coherence is achieved} when the forecast distribution of the aggregate series is identical to the distribution of the sum of its children's forecast series under an implicit or explicit joint distribution \EDITtwo{\citep{taieb2017coherent_prob_forecasts,taieb2021hierarchical_electricity,wickramasuriya2023probabilistic_gaussian,panagiotelis2023probabilistic_reconciliation,panagiotelis2020hierarchical_probabilistic_coherence}}. 
Hierarchical reconciliation strategies provide an interesting approach for bringing back \EDITtwo{mean and probabilistic hierarchical coherence} into neural forecasting methods. Early work focused on reconciling independently generated mean base forecasts \citep{hyndman2011optimal_combination_hierarchical, wickramasuriya2019hierarchical_mint_reconciliation}. The reconciliation strategies improved accuracy, and recently a better understanding of the reconciliation process was provided through the language of forecast combinations \citep{hollyman2021hierarchical_understanding_reconciliation}. Similar two-step forecast reconciliation methods were later extended to probabilistic forecasts as well \citep{taieb2017coherent_prob_forecasts, gamakumara_2018}, first estimating the marginal distributions independently and then reconciling them. Finally \cite{han2021hierarchical_sharq} and~\cite{rangapuram2021hierarchical_e2e} proposed combining these two steps into a single neural network. \EDITtwo{Efficiently leveraging the \emph{cross-learning} approach~\citep{makridakis2018m4competition_results,spiliotis2021cross_learning} to improve accuracy while maintaining \EDITtwo{probabilistic} coherence remains a challenge.}

In this work, we present a novel method for producing \EDITtwo{probabilistic coherent} forecasts. It combines the strength of modern neural networks and an intuitive statistical model for the \EDITtwo{disaggregated-forecast} joint distribution. In contrast to earlier efforts~\citep{han2021hierarchical_sharq, rangapuram2021hierarchical_e2e}, our method is \EDITthree{an extension to} the \emph{Mixture Density Networks} (\MDN; \citealt{bishop_mdn_1994}). 
It is coherent by construction and does not require an explicit re-conciliation step, either as part of a single end-to-end network or as a separate step. 
We call it the \ourscomplete \ (\ours). The \ours \ models the joint probability \EDITthree{mass function} of the multivariate time series as a finite mixture of Poisson distributions and combines it with the well-established \MQForecaster \ neural architecture \citep{wen2017mqrcnn, eisenach2020mqtransformer}. This is possible because we formulate the problem as an \MDN, and we can choose a relevant class of probabilistic distributions for the statistical model and the neural architecture independently. The key advantages of our method are: 

\begin{enumerate}
    \item \textbf{Flexible Forecast Distribution}:
    We model the forecast distribution as a finite mixture of Poisson random variables, which is analogous to a Poisson kernel density. The resulting distribution is flexible, capable of accurately modeling a wide range of joint probability distributions, and compatible as an output layer with state-of-the-art neural architectures. We will demonstrate this empirically on three different forecasting tasks in Section~\ref{section:experiments}.
    \item \textbf{Computational Efficiency}: Learning coherent forecast distributions in a high dimensional hierarchical space can be computationally intractable. To alleviate this, we anchor the \ours \ on a multivariate distribution of the bottom level time series and employ composite likelihood optimization strategies, which enables it to extend to large-scale applications.
\end{enumerate}

The rest of the work is structured as follows. 
In Section~\ref{section:literature} we introduce mathematical notations and review the \emph{statistical} and \emph{neural-network} hierarchical forecast literature. 
We describe our method's probabilistic model in Section~\ref{section:pmm} and the learning and inference methods in Section~\ref{section:estimation_inference}. 
In Section~\ref{section:pmmcnn} we discuss the neural network architecture and in Section~\ref{section:experiments} we perform an empirical evaluation. 
Finally, in Section~\ref{section:conclusion} we discuss future work and conclude.

\clearpage
\section{Literature Review} \label{section:literature}
\subsection{Hierarchical Forecasting Notation}
\label{section:hierarchical_forecasting_notation}
Mathematically a hierarchical multivariate time series can be denoted by the vector $\mathbf{y}_{[a,b],t} = [\,\mathbf{y}_{[a],t}^{\top} \;|\; \mathbf{y}_{[b],t}^{\top}\,]^{\top} \in \mathbb{R}^{(N_{a}+N_{b})}$ for each time point $t$; where $[a],[b]$ stand for \EDITtwo{the set of} all aggregate and bottom indices of the time series respectively. 
The total number of series in the hierarchy is $|\,[a,b]\,|=(N_{a}+N_{b})$, where $|\,[a]\,|=N_{a}$ is the number of aggregated series and $|\,[b]\,|=N_{b}$ the number of bottom series that are at the most \EDITthree{disaggregated} level possible. 
Time indices for past information until $t$ are given by the set $[t]$ with length $|\,[t]\,|=N_{t}$. With this notation the hierarchical aggregation constraints at each time point $t$ have the following matrix representation:
\begin{equation}\label{eqn:summation}
\mathbf{y}_{[a,b],\EDITthree{t}}  = \EDITtwo{\mathbf{S}_{[a,b][b]}} \mathbf{y}_{[b],\EDITthree{t}} \quad \Leftrightarrow \quad 
\begin{bmatrix}\mathbf{y}_{[a],\EDITthree{t}}
\\ 
\mathbf{y}_{[b],\EDITthree{t}}\end{bmatrix} 
= \begin{bmatrix}
\EDITtwo{\mathbf{A}_{[a][b]}}\\ 
\mathbf{I}_{[b][b]}
\end{bmatrix}\mathbf{y}_{[b],\EDITthree{t}}
\end{equation}

The matrix $\EDITtwo{\mathbf{S}_{[a,b][b]}} \in \mathbb{R}^{(N_{a}+N_{b})\times N_{b}}$ aggregates the bottom level to the series above.
It is composed by stacking the aggregation matrix $\EDITtwo{\mathbf{A}_{\mathrm{[a][b]}}} \in \mathbb{R}^{N_{a}\times N_{b}}$ and an $N_b\times N_b$ identity matrix $\mathbf{I}_{[b][b]}$.

For example, Figure \ref{fig:hierarchical_tree} represents a simple hierarchy where each parent node is the sum of its children. 
Here the dimensions are $N_{a}=3$, $N_{b}=4$, and the hierarchical, aggregated and base series are respectively:
\begin{equation}
    \begin{aligned}
        y_{\mathrm{Total},\EDITthree{t}} = y_{\beta_{1},\EDITthree{t}}+y_{\beta_{2},\EDITthree{t}}+y_{\beta_{3},\EDITthree{t}}+y_{\beta_{4},\EDITthree{t}} 
        \qquad \qquad \qquad \qquad \qquad \\
        \mathbf{y}_{[a],\EDITthree{t}}=\left[y_{\mathrm{Total},\EDITthree{t}},\; y_{\beta_{1},\EDITthree{t}}+y_{\beta_{2},\EDITthree{t}},\;y_{\beta_{3},\EDITthree{t}}+y_{\beta_{4},\EDITthree{t}}\right]^{\intercal} 
        \qquad
        \mathbf{y}_{[b],\EDITthree{t}}=\left[ y_{\beta_{1},\EDITthree{t}},\; y_{\beta_{2},\EDITthree{t}},\; y_{\beta_{3},\EDITthree{t}},\; y_{\beta_{4},\EDITthree{t}} \right]^{\intercal}
    \end{aligned}
\label{eq:hierarchical_aggregations}
\end{equation}

\begin{figure}[tb]
\centering
\includegraphics[width=112mm]{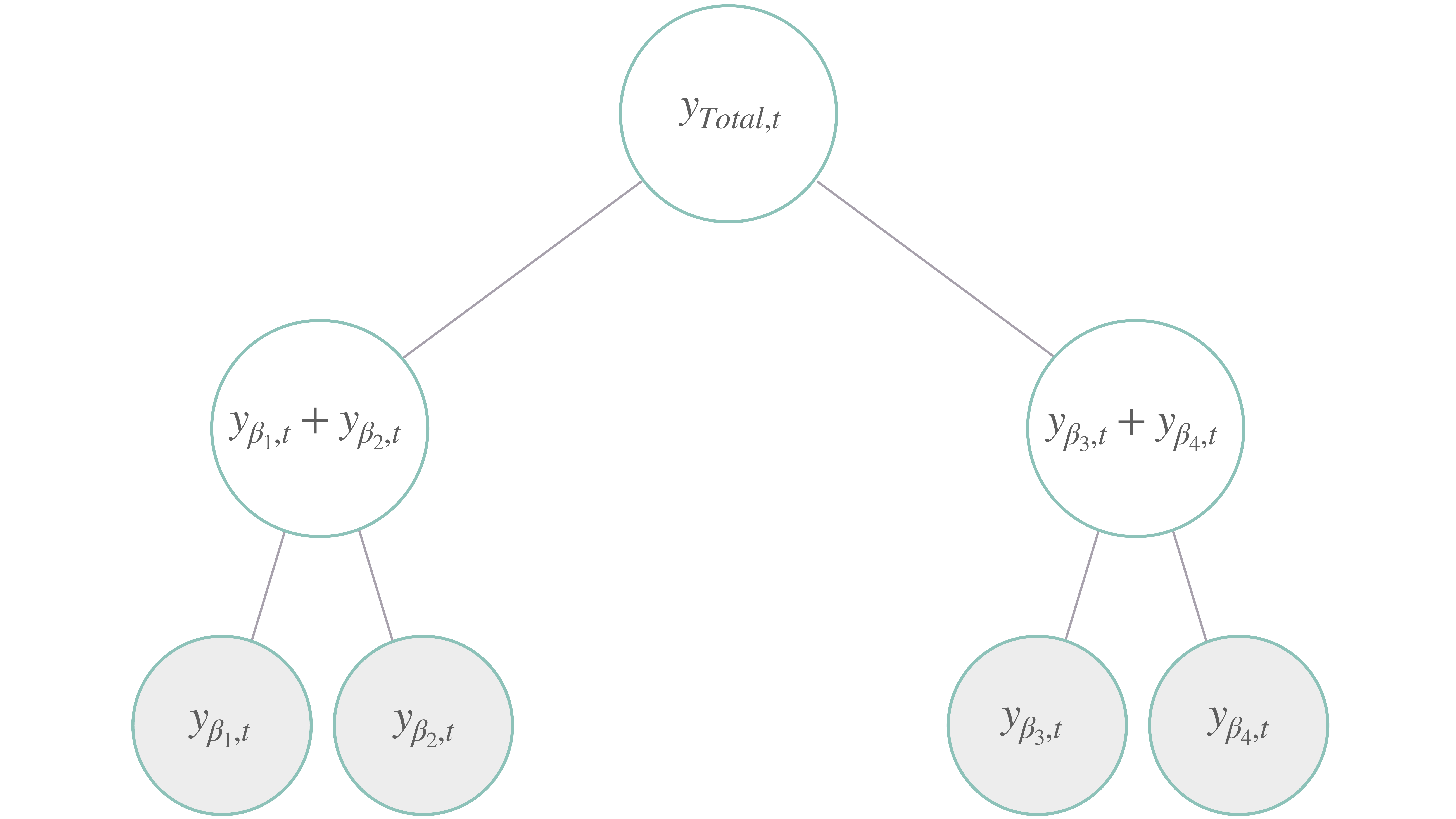}
\caption{\textit{A simple three level time series hierarchical structure, with four bottom level variables. The disaggregated bottom variables are marked with gray background. In this description each node represents non overlapping series for a single point in time.}}
\label{fig:hierarchical_tree}
\end{figure}

The constraint matrix of the Figure~\ref{fig:hierarchical_tree} example and the corresponding aggregations from Equation~(\ref{eq:hierarchical_aggregations}) \EDIT{is the following}:
\begin{equation*}
\EDITtwo{\mathbf{S}_{[a,b][b]}}
=
\begin{bmatrix}
           \\
\EDITtwo{\mathbf{A}_{\mathrm{[a][b]}}} \\ 
           \\ 
           \\
\mathbf{I}_{\mathrm{[b][b]}} \\
           \\
\end{bmatrix}
=
\begin{bmatrix}
\;1 & 1 & 1 & 1 \\
\;1 & 1 & 0 & 0 \\
\;0 & 0 & 1 & 1 \\ \hline
\;1 & 0 & 0 & 0 \\
\;0 & 1 & 0 & 0 \\
\;0 & 0 & 1 & 0 \\
\;0 & 0 & 0 & 1 \\
\end{bmatrix}
\end{equation*}

\subsection{Mean Forecast Reconciliation Strategies}
\label{sec:reconciliation_strategies}

\EDITthree{Given a forecast creation date $t$ and horizon $h$, the forecast indexes are denoted by $\tau \in [t+1:t+h]$.}    
Most of the prior statistical solutions to the hierarchical forecasting challenge implement a two-stage process, first generating base forecasts $\hat{\mathbf{y}}_{[a,b],\tau}\in \mathbb{R}^{N_{a}+N_{b}}$, and then revising them into coherent forecasts $\tilde{\mathbf{y}}_{[a,b],\tau}$ through reconciliation. The reconciliation is compactly expressed by:
\begin{equation}\label{eqn:hyndman_framework}
\tilde{\mathbf{y}}_{[a,b],\tau} = \EDITtwo{\mathbf{S}_{[a,b][b]}} \mathbf{P}_{[b][a,b]} \hat{\mathbf{y}}_{[a,b],\tau}
\end{equation}
where $\EDITtwo{\mathbf{S}_{[a,b]}}\in \mathbb{R}^{(N_{a}+N_{b}) \times N_{b}}$ is the hierarchical aggregation matrix and $\mathbf{P}_{[b][a,b]} \in \mathbb{R}^{N_{b} \times (N_{a}+N_{b})}$ \EDIT{is a matrix} determined by the reconciliation strategies. The most common reconciliation methods can be classified into \emph{top-down}, \emph{bottom-up} and \emph{alternative} approaches. 

\begin{itemize}
    \item Bottom-Up: The simple \emph{bottom-up} strategy, abbreviated as \NaiveBU~\citep{orcutt1968hierarchical_bottom_up}, first generates bottom level forecasts and then aggregates them to produce \EDITtwo{forecasts} for all the series in the multivariate structure. 
    
    \item Top-Down: The \emph{top-down} strategy, abbreviated as \TD~\citep{gross1990hierarchical_top_down, fliedner1999hierarchical_top_down2}, distributes the total forecast, and then disaggregates it down the hierarchy using proportions that can be historical actuals or forecasted separately. 
    
    \item Alternative: The more recent \emph{middle-out} strategies, denoted as \MO~\citep{hyndman2011optimal_combination_hierarchical, hyndman2017forecasting_book}), treat the second stage reconciliation as an optimization problem for the matrix $\mathbf{P}_{[b][a,b]}$. These reconciliation techniques include among others \emph{Game-Theoretically OPtimal} (\GTOP; \cite{van2015game}), learning a projection for reducing quadratic loss, a generalized least squares model for minimizing trace of the squared error matrix, namely the \emph{minimum trace} reconciliation (\MinT; \cite{wickramasuriya2019hierarchical_mint_reconciliation}) and the \emph{empirical risk minimization} approach (\ERM; \cite{taieb2019hierarchical_regularized_regression}).
\end{itemize}

\vspace{3mm}
Despite the advancements in alternative reconciliation strategies with statistical solutions, as mentioned in Section~\ref{section:introduction}, there are still fundamental limitations. 
First, most post-process reconciliation methods produce mean forecasts but not probabilistic forecasts, with some exceptions that \EDIT{have relied on univariate statistical methods with strong probability assumptions for the base series that may be restrictive \citep{taieb2017coherent_prob_forecasts, panagiotelis2020hierarchical_probabilistic_coherence}}.
\EDIT{Second, the mentioned methods independently learn the model parameters of the base level forecasts, limiting the base model's optimization inputs to single series. This approach induces an over-fitting prone setting for complex nonlinear methods, which, as noted by the forecasting community, is one of their biggest challenges \citep{makridakis2018machine_learning_concerns}. The implied data scarcity translates into a missed opportunity to leverage the flexibility of nonlinear methods.}


\subsection{Probabilistic Coherent Forecasting}
\label{section:probabilistic_coherent_forecasting}

There is a large body of related work on Bayesian hierarchical\footnote{ Note that the term hierarchy in Bayesian hierarchy is different from its use in hierarchical forecasting. The former refers to the conditional dependence of a posterior distribution's parameters, and the latter refers to the aggregation constraints across multiple time series.} modeling of Spatio-temporal data; \EDITtwo{with joint coherent predictive distributions} \citep{wikle1998hierarchical, diggle2013}. These models, however, come with strong assumptions, as they typically assume a stationary Gaussian process to induce correlations and rely on Markov Chain Monte Carlo to estimate the posterior distribution \citep{diggle2001, diggle2013}. For modeling count data, variants of Bayesian Hierarchical Poisson Regression Models were introduced \citep{christiansen1997hierarchical, park2009application}, but they usually operate with linear \EDITthree{model} assumptions. There are few specialized methods on coherent probabilistic forecasting as most research on hierarchical forecasting has been limited to point predictions. Exceptions are the work by \cite{shang2017coherent_quantile_forecasts} and \cite{jeon2019coherent_quantile_forecasts} that provide an early exploration of forecast quantile reconciliation; \cite{taieb2017coherent_prob_forecasts} and \cite{taieb2021hierarchical_electricity} that propose the combination of bottom-level forecast marginal distributions with empirical copula functions describing their dependencies to create aggregate predictive distributions. To the best of our knowledge, a formal definition of probabilistic coherence has only been explored by \EDITthree{\citep{taieb2021hierarchical_electricity, gamakumara_2018, wickramasuriya2023probabilistic_gaussian, panagiotelis2023probabilistic_reconciliation}. \cite{taieb2021hierarchical_electricity}}
provide\EDITthree{s} a convolution-based definition while~\EDITthree{\cite{panagiotelis2023probabilistic_reconciliation}} 
provides a generalized and intuitive framework\footnote{\HierEtoE\ by \cite{rangapuram2021hierarchical_e2e} informally introduce\EDITthree{d} a similar probabilistic coherence notion.} that we follow in our work and introduce below:

\begin{definition}
\label{def:probabilistic_coherence} (Probabilistic Coherence).
Let $(\Omega_{[b]}, \mathcal{F}_{[b]}, \hat{\mathbb{P}}_{[b]})$ be a probabilistic forecast space, \EDITthree{with sample space $\Omega_{[b]}$}, $\mathcal{F}_{[b]}$ its $\sigma$-algebra, \EDITthree{and $\hat{\mathbb{P}}_{[b]}$ a forecast probability}. 
\EDITthree{Let $\mathbf{S}_{[a,b][b]}(\cdot):\Omega_{[b]} \mapsto \Omega_{[a,b]}$ be the linear transformation implied by the constraints matrix}.
A coherent probabilistic forecast space $(\Omega_{[a,b]}, \mathcal{F}_{[a,b]}, \hat{\mathbb{P}}_{[a,b]})$ satisfies:
\begin{equation}
    \hat{\mathbb{P}}_{[a,b]}\left(\mathbf{S}_{[a,b][b]}(\mathcal{B})\right) = \hat{\mathbb{P}}_{[b]}\left(\mathcal{B}\right)
    \quad \text{for any set } \mathcal{B} \in \mathcal{F}_{[b]} 
    \text{ and \EDITthree{set's image} } \mathbf{S}_{[a,b][b]}(\mathcal{B}) \in \mathcal{F}_{[a,b]} 
\end{equation}

\noindent i.e., it assigns a zero probability to sets in $\mathbb{R}^{N_{a}+N_{b}}$ not containing any coherent forecasts.
\end{definition}

For a simple definition-satisfying example, consider three random variables ($Y_{\alpha}, Y_{\beta_{1}}, Y_{\beta_{2}}$) with $Y_{\alpha}\EDITthree{:=}Y_{\beta_{1}} + Y_{\beta_{2}}$. A coherent forecast assigns zero probability to the variable realizations $(y_{\alpha},\; y_{\beta_{1}},\; y_{\beta_{2}})$ if they do not satisfy the aggregation constraint $y_{\alpha}=y_{\beta_{1}} + y_{\beta_{2}}$. An equivalent definition of coherence is to require that the  marginal distributions are derivable from the joint distribution of the bottom level random variables. In this case, the probability function of interest is $\hat{\mathbb{P}}(Y_{\beta_{1}}, Y_{\beta_{2}})$, and the marginal probabilities can be derived from it \EDITtwo{using indicator functions} as follows:

\EDITtwo{
\begin{align*}
    \hat{\mathbb{P}}(Y_{\beta_{1}}=y_{\beta_1}) = \sum_{y_{\beta_{2}}} \hat{\mathbb{P}}(Y_{\beta_{1}}=y_{\beta_1}, Y_{\beta_{2}}=y_{\beta_2}) \quad \text{and} \quad 
    \hat{\mathbb{P}}(Y_{\beta_{2}}=y_{\beta_2}) = \sum_{y_{\beta_{1}}} \hat{\mathbb{P}}(Y_{\beta_{1}}=y_{\beta_1}, Y_{\beta_{2}}=y_{\beta_2}) \\
    \hat{\mathbb{P}}(Y_{\alpha}=y_{\alpha}) = 
    \sum_{y_{\beta_{1}},y_{\beta_{2}}} \hat{\mathbb{P}}(Y_{\beta_{1}}=y_{\beta_{1}}, Y_{\beta_{2}}=y_{\beta_{2}}) \; \mathbbm{1}(y_{\alpha}=y_{\beta_{1}} + y_{\beta_{2}})\EDITthree{,} \quad\quad\quad\quad\;
\end{align*}
}
\EDITthree{
\noindent where $\mathbbm{1}(y_{\alpha}=y_{\beta_{1}} + y_{\beta_{2}})$ equals 1 when $y_{\alpha}=y_{\beta_{1}} + y_{\beta_{2}}$, and 0 otherwise.}
The marginal probability for $Y_{\alpha}$, has the aggregation constraint built into its definition and is by construction a hierarchically coherent probability with respect to $\hat{\mathbb{P}}(Y_{\beta_{1}})$ and $\hat{\mathbb{P}}(Y_{\beta_{2}})$. Note that knowledge of the joint distribution is not required to generate hierarchically coherent forecast. However, if we had access to the joint distribution\EDITthree{,} constructing hierarchically coherent marginal distributions is straight forward.

\subsection{Hierarchical Neural Forecasting}
\label{section:neural_forecasting_literature}

In the last decade, neural network-based forecasting methods have become ubiquitous in large-scale forecasting applications \citep{wen2017mqrcnn, bose2017probabilistic_scale, madeka2018sample, eisenach2020mqtransformer}, transcending industry boundaries into academia, as it has redefined the state-of-the-art in many practical tasks \citep{yu2017graphconv_traffic, ravuri2021weather_deepmind, olivares2022nbeatsx} and forecasting competitions \citep{makridakis2018m4competition_results, makridakis2020m5competition_results}. 

The latest neural network-based solutions to the hierarchical forecasting challenge include methods like the \emph{Simultaneous Hierarchically Reconciled Quantile Regression} (\SHARQ; \cite{han2021hierarchical_sharq}) and \emph{Hierarchically Regularized Deep Forecasting} (\HIRED; \cite{paria2021hierarchical_hired}) and \EDIT{the \emph{Probabilistic Robust Hierarchical Network} (\PROFHiT; \cite{kamarthi2022profhit_network})} that augment the training loss function with approximations to the hierarchical constraints. 
And \emph{Hierarchical End-to-End} learning (\HierEtoE; \cite{rangapuram2021hierarchical_e2e}) that integrates an alternative reconciliation strategy in its \EDITtwo{forecasts} through linear projections. 
With the exception of \HierEtoE, the rest of these methods encourage \EDITtwo{probabilistic coherence} through regularization but do not guarantee it. 
Additionally, if a user requires updating the hierarchical structure of interest, a whole new optimization of the networks would be needed for the existing methods to forecast the structure correctly. 

Our proposed method, \ours, can address these deficiencies by \EDIT{specifying any network’s output as our proposed Poisson Mixture predictive distribution}. With this predictive distribution, a model needs only to forecast the bottom-most level in the hierarchy, after which any desired hierarchical structure of interest can be predicted with guaranteed probabilistic hierarchical coherence.


\newpage
\section{Probabilistic Model} \label{section:pmm}
\begin{figure}[t]
\centering
\includegraphics[width=90mm]{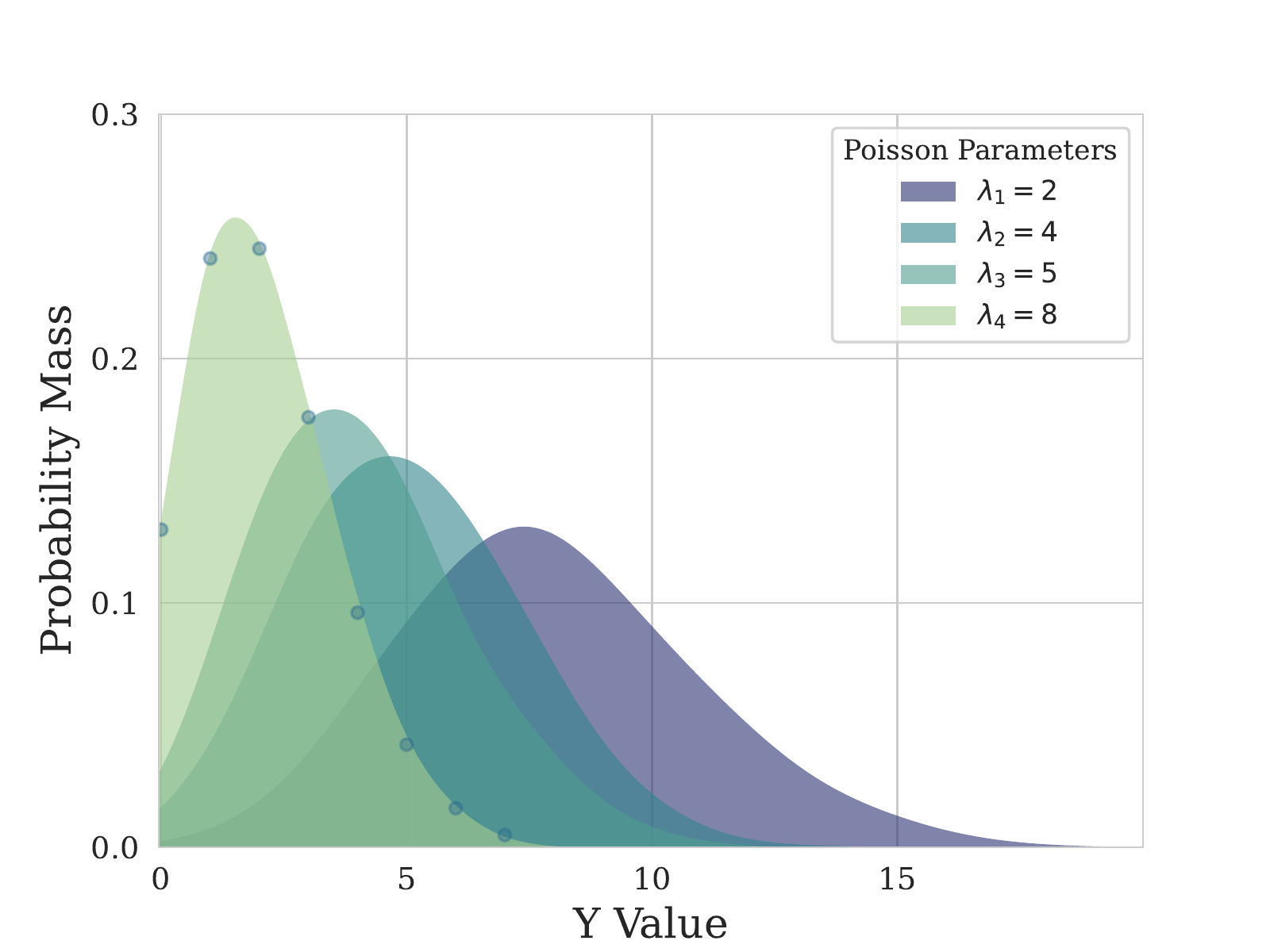}
\caption{The Poisson Mixture distribution has desirable properties that make it well-suited for probabilistic hierarchical forecasting for count data. 
Under minimal conditions, its aggregation rule implies probabilistic coherence of the random variables it models.
}
\label{fig:poisson_mixture}
\end{figure}

\subsection{Joint Poisson Mixture Distribution}

\EDITtwo{In this work, we focus our attention on hierarchical forecasting of \EDITthree{discrete events with non-negative variables}. Many forecasting problems fall in this category as shown by the data sets described in Section \ref{section:experiments}. However, the general framework presented here works for continuous distributions as well with a suitably chosen base class for the mixture distribution, e.g., Gaussian distribution. For discrete events,} we start by postulating that the forecast joint probability \EDITthree{of the bottom level future multivariate time series' realization $\mathbf{y}_{[b][t+1:t+h]} \in \mathbb{N}^{N_{b} \times h}$} is:
\begin{equation}
\label{eq:joint_distribution}
\begin{split}
    \hat{\mathbb{P}}\left(\mathbf{y}_{[b][t+1:t+\EDITthree{h}]}|\;\blambda_{[b][k][t+1:t+h]}\right) 
    &= \sum_{\EDITthree{\kappa}=1}^{\EDITthree{N_{k}}} w_\EDITthree{\kappa} \prod_{(\beta,\tau) \in [b][t+1:t+h]} 
    \mathrm{Poisson} \left(y_{\EDITthree{\beta},\EDITthree{\tau}}|\lambda_{\EDITthree{\beta},\EDITthree{\kappa},\EDITthree{\tau}} \right) \\
    &= \sum_{\EDITthree{\kappa}=1}^{\EDITthree{N_{k}}} w_\EDITthree{\kappa} \prod_{(\beta,\tau) \in [b][t+1:t+h]} 
    \frac{\lambda_{\beta,\EDITthree{\kappa},\tau}^{y_{\beta,\tau}}}{y_{\beta,\tau}!} e^{-\lambda_{\beta,\EDITthree{\kappa},\tau}}
\end{split}
\end{equation}

The mixture model describes \EDITthree{individual} bottom level \EDITthree{time series} and their correlations through the distribution of the mixture's latent variables defined by the Poisson rates $\blambda_{[b][k][t+1:t+h]} \in \mathbb{R}^{N_{b} \times N_{k} \times h}$ and the associated weights \EDITtwo{$\mathbf{w}_{[k]} \in [0,1]^{N_{k}}$}, with $\mathbf{w}_{[k]} \geq 0$ and $\sum_{\kappa=1}^{N_k}w_\kappa = 1$. The number of Poisson components $\mid[k]\mid = N_k$ is a hyperparameter of the model that controls the flexibility of the mixture distribution. The mixture distribution can also be interpreted as a multivariate kernel density approximation, with Poisson kernels, to the actual joint probability distribution \EDITtwo{$\mathbb{P}\left(\mathbf{y}_{[b][t+1:t+\EDITthree{h}]}\right)$}. We show an example of the Poisson mixture distribution in Figure~\ref{fig:poisson_mixture}.

The joint distribution in Equation~(\ref{eq:joint_distribution}), assumes that the modeled observations $\mathbf{y}_{[b][t+1:t+h]}$ are \emph{conditionally independent} given the latent Poisson rates $\blambda_{[b][k][t+1:t+h]}$. That is for all bottom level series and horizons $ (\beta, \tau) \neq (\beta',\tau') \text{, } (\beta, \tau), (\beta', \tau') \in [b][t+1:t+h]$ and $\kappa \in [k]$:
\begin{equation}
\label{eq:conditional_indep}
    \hat{\mathbb{P}}(Y_{\beta,\tau},  Y_{\beta',\tau'} \vert \lambda_{\beta,\EDITthree{\kappa},\tau},  \lambda_{\beta',\EDITthree{\kappa},\tau'} )
    = \hat{\mathbb{P}}(Y_{\beta,\tau} \vert \lambda_{\beta,\EDITthree{\kappa},\tau}) \hat{\mathbb{P}}(Y_{\beta',\tau'},  \vert \lambda_{\beta',\EDITthree{\kappa},\tau'})
\end{equation}

\subsection{Marginal Distributions for Bottom Series}\label{sec:bottom_marginal_distribution}
Equation~(\ref{eq:joint_distribution}) describes the joint distribution of all bottom level time series.
We can derive the marginal distribution for one of the bottom level series $\beta \in [b]$ and for a single future time period $\tau \in [t+1:t+h]$ by integrating out \EDITthree{the remaining series and time indices}. The marginal distribution is:
\begin{align}
    \hat{\mathbb{P}}(Y_{\beta, \tau} = y_{\beta, \tau}) 
    &= \sum_{\mathbf{y}_{[b][t+1:t+h] \setminus (\beta, \tau)}} \sum_{\kappa=1}^{N_{k}} w_\kappa \prod_{(\beta^\prime,\tau^\prime) \in [b][t+1:t+h]}\mathrm{Poisson}(y_{\beta^\prime,\tau^\prime} \vert \lambda_{\beta^\prime,\kappa,\tau^\prime}) \nonumber \\
    &= \sum_{\kappa=1}^{N_{k}} w_\kappa \; \mathrm{Poisson}(y_{\beta,\tau} \vert \lambda_{\beta,\kappa,\tau}) \times \nonumber \prod_{(\beta^\prime,\tau^\prime) \in [b][t+1:t+h] \setminus (\beta, \tau)} \quad \sum_{y_{\beta^\prime,\tau^\prime}}\mathrm{Poisson}(y_{\beta^\prime,\tau^\prime} \vert \lambda_{\beta^\prime,\kappa,\tau^\prime}) \nonumber \\
    &= \sum_{\kappa=1}^{N_k} w_\kappa \; \mathrm{Poisson}(y_{\beta,\tau} \vert \lambda_{\beta,\kappa,\tau})
\end{align}

\EDIT{ We get a clean final expression for the marginal distribution}, which is equivalent to simply dropping all other time series and time periods from the product in Equation~(\ref{eq:joint_distribution}).

\subsection{Marginal Distributions for Aggregate Series}
\label{sec:probabilistic_properties}
\EDIT{ An important consequence of the \emph{conditional independence} in Equation~(\ref{eq:conditional_indep}) is that the marginal distributions at aggregate levels $\mathbf{y}_{[a],\tau}$ can be computed via simple component-wise addition of the lower level Poisson rates. For example consider an aggregate level \EDITtwo{variable $Y_{\alpha, \tau} \EDITthree{:=} Y_{\beta_1, \tau} + Y_{\beta_2, \tau}$}. The marginal distribution for \EDITtwo{$Y_{\alpha, \tau}$} can be derived from the joint distribution in Equation~(\ref{eq:joint_distribution}) as follows. First marginalize all other series and time indices (as done in Section \ref{sec:bottom_marginal_distribution} above), which gives us the joint distribution for \EDITtwo{$Y_{\beta_1, \tau}$ and $Y_{\beta_2, \tau}$}
\EDITtwo{

\begin{equation*}
    \hat{\mathbb{P}}(Y_{\beta_1, \tau}=y_{\beta_1, \tau}, Y_{\beta_2, \tau}=y_{\beta_2, \tau}) = \sum_{\kappa=1}^{N_{k}} w_\kappa \; \mathrm{Poisson}(y_{\beta_1,\tau} \vert \lambda_{\beta_1,\kappa,\tau}) \times \mathrm{Poisson}(y_{\beta_2,\tau} \vert \lambda_{\beta_2,\kappa,\tau})
\end{equation*}

Now, the aggregate marginal distribution is 
\begin{eqnarray*}
    \hat{\mathbb{P}}(Y_{\alpha, \tau}=y_{\alpha, \tau}) &=& \sum_{y_{\beta_1, \tau}, y_{\beta_2, \tau}} \hat{\mathbb{P}}(Y_{\beta_1, \tau}=y_{\beta_1, \tau}, Y_{\beta_2, \tau}=y_{\beta_2, \tau}) \; \mathbbm{1}(y_{a,\tau}=y_{\beta_1, \tau} + y_{\beta_2, \tau})   
\end{eqnarray*}
}
For each mixture component \EDITthree{with index} $\kappa$, the distributions of $y_{\beta_1, \tau}$ and $y_{\beta_2, \tau}$ conditioned on respective Poisson rates $\lambda_{\beta_1, \kappa, \tau}$ and $\lambda_{\beta_2, \kappa, \tau}$ are independent Poisson distributions, and therefore, the distribution of the aggregate is another \EDITthree{Poisson Mixture} with parameters $\lambda_{\EDITthree{\alpha},\kappa, \tau} = \lambda_{\beta_1, \kappa, \tau} + \lambda_{\beta_2, \kappa, \tau}$.
\begin{equation}
    \hat{\mathbb{P}}(\EDITtwo{Y_{\alpha,\tau}=}y_{\alpha, \tau}) =  \sum_{\kappa=1}^{N_{k}} w_\kappa \mathrm{Poisson}(y_{\EDITthree{\alpha},\tau} \vert \lambda_{\EDITthree{\alpha},\kappa, \tau} = \lambda_{\beta_1, \kappa, \tau} + \lambda_{\beta_2, \kappa, \tau})
\end{equation}

The \emph{aggregation rule} can be concisely described as: 
\begin{equation} \label{eq:aggregation_rule_concise}
\blambda_{[a][k],\tau} = \mathbf{A}_{\mathrm{[a][b]}}\blambda_{[b][k], \tau}
\end{equation}

\noindent with $\mathbf{A}_{\mathrm{[a][b]}}$ the hierarchical aggregation matrix defined in Section~\ref{section:hierarchical_forecasting_notation}.} The joint predictive distribution is \EDITthree{hierarchically} coherent by construction. \EDITtwo{We offer a formal proof of \ours'\EDITthree{s} satisfaction of the probabilistic coherence property from Definition~\ref{def:probabilistic_coherence} in \ref{sec:probabilistic_coherence}.}


\subsection{Covariance Matrix}
\label{sec:covariance_modeling}
Using the law of total covariance and the \emph{conditional independence} from Equation~(\ref{eq:conditional_indep}), we show in \ref{appen:correlation} that the covariance of any two bottom level series naturally follows: \EDITtwo{
\begin{equation}\label{eq:covariance}
    \mathrm{Cov}(Y_{\beta, \tau}, Y_{\beta', \tau'}) = \overline{\lambda}_{\beta,\tau} \EDITthree{\mathbbm{1}(\beta=\beta')\mathbbm{1}(\tau=\tau')} + \sum_{\kappa=1}^{N_k} w_\kappa \left(\lambda_{\beta,\kappa,\tau} - \overline{\lambda}_{\beta,\tau}\right) \left( \lambda_{\beta',\kappa,\tau'} - \overline{\lambda}_{\beta',\tau'}\right)
\end{equation}

\noindent where $\overline{\lambda}_{\beta,\tau} = \sum_{\kappa=1}^{N_k} w_\kappa \lambda_{\beta, \kappa, \tau} $ }. \EDITtwo{\ref{sec:componence_covariance_expressiveness} shows the non-diagonal covariance matrix expressivity, as determined by its rank, depends on the number of mixture components from Equation~(\ref{eq:joint_distribution}). 
}

\section{Parameter Estimation and Inference}\label{section:estimation_inference}
\subsection{Maximum Joint Likelihood}

To estimate model parameters, we can maximize the joint likelihood implied by the joint distribution from Equation~(\ref{eq:joint_distribution}). 
Let $\btheta$ represent the neural network parameters as described in Section~\ref{section:pmmcnn}, we parameterize the probabilistic model with \EDITtwo{Poisson rates $\blambda_{[b][k][t+1:t+h]}$} as follows:

\begin{equation}
\begin{aligned}
    \blambda_{[b][k][t+1:t+h]} &:= \hat{\blambda}_{[b][k][t+1:t+h]}\EDITthree{(\btheta)} \\
    \mathbf{w}_{[k]} &:= \hat{\mathbf{w}}_{[k]}\EDITthree{(\btheta)}
\end{aligned}
\end{equation}



\EDITthree{The Poisson rates and the mixture weights are conditioned through the network's parameters on forecasting features discussed in Section~\ref{sec:hierarchical_forecasting_features}.} The negative log-likelihood can then be written\footnote{We kept notations simple and omitted the explicit conditioning on input features.}:
\begin{equation}\label{eq:negative_log_joint_likelihood}
    \mathcal{L}(\btheta) = - \mathrm{log} \left[\sum_{\kappa=1}^{N_{k}} \hat{w}_\kappa(\btheta) \prod_{(\beta,\tau) \in [b][t+1:t+h]}\left( \frac{(\hat{\lambda}_{\beta,\kappa,\tau}(\btheta))^{^{y_{\beta,\tau}}}\exp{\{-\hat{\lambda}_{\beta,\kappa,\tau}(\btheta)\}}}{(y_{\beta,\tau})!}\right) \right]
\end{equation}
 
\EDIT{This is the same expression as the multivariate probability \EDITthree{mass function} in Equation~(\ref{eq:joint_distribution}) but parametrized as a function of the neural network parameters $\theta$}. The maximum likelihood estimation method (MLE) has desirable properties like statistical efficiency and consistency. However, the mixture components cannot be estimated separately, \EDITthree{and} for this reason, MLE is only feasible for hierarchical time series with a small number of series. Additional exploration is needed to make the estimation scalable.

\subsection{Maximum Composite Likelihood}

\EDITthree{A computationally efficient} alternative to MLE for estimating the model parameters is to maximize the composite likelihood. This method involves breaking up the high-dimensional space into smaller sub-spaces, and the composite likelihood consists of the weighted product of the marginal likelihoods of the subspaces \citep{lindsay_1988}. For simplicity, we used uniform weights. 

In addition to the computational efficiency, maximizing the composite likelihood provides a robust and unbiased estimate of marginal model parameters with the drawback that the model inference may suffer from properties similar to a misspecified model \citep{varin_2011}. We will discuss variants of the composite likelihood below.

\subsubsection{Naive Bottom Up}
A simple option of using composite likelihood is to define each bottom-level time series as its likelihood sub-space and treat them as independent during model training ~\citep{orcutt1968hierarchical_bottom_up}. We refer to this estimation method for the \ours \ as \emph{Naive Bottom Up} (\ours-\NaiveBU). The negative logarithm of the \ours-\NaiveBU \ composite likelihood is:
\begin{equation}\label{eq:composite_likelihood_naiveBU}
    \mathcal{L}_{\mathrm{NaiveBU}}(\btheta) = - \sum_{\beta \in [b]}\mathrm{log} \left[\sum_{\kappa=1}^{N_{k}} \hat{w}_\kappa(\btheta) \prod_{\tau \in [t+1:t+h]}\left( \frac{(\hat{\lambda}_{\beta,\kappa,\tau}(\btheta))^{^{y_{\beta,\tau}}}\exp{\{-\hat{\lambda}_{\beta,\kappa,\tau}(\btheta)\}}}{(y_{\beta,\tau})!}\right) \right]
\end{equation}
\EDIT{ Even though the sub-space consists of single bottom level time series, \ours-\NaiveBU \ is still a multi-variate model with the composite likelihood defined over multiple time points $[t+1:t+h]$. Maximizing the \ours-\NaiveBU \ composite likelihood will still learn correlations across the time points} and will generate coherent forecast distributions for aggregations in the time dimension. It does not attempt, however, to discover correlations across different time series. 

\subsubsection{Group Bottom Up}
If prior information helps us identify groups of time series with interesting correlation structures, we may estimate them by including the groups in the composite likelihood. We refer to this estimation method for the \ours \ as \emph{Group Bottom Up} (\ours-\GroupBU). Let $\mathcal{G}=\{[g_i]\}$ be time-series groups, then the negative log composite likelihood for the \ours-\GroupBU \ is 
\begin{equation}\label{eq:composite_likelihood_groupBU}
    \mathcal{L}_{\mathrm{GroupBU}}(\btheta) = - \sum_{[g_{i}] \in \mathcal{G}}\mathrm{log} \left[\sum_{\kappa=1}^{N_{k}} \hat{w}_\kappa(\btheta) \prod_{(\beta,\tau) \in [g_i] [t+1:t+h]}\left( \frac{(\hat{\lambda}_{\beta,\kappa,\tau}(\btheta))^{^{y_{\beta,\tau}}}\exp{\{-\hat{\lambda}_{\beta,\kappa,\tau}(\btheta)\}}}{(y_{\beta,\tau})!}\right) \right]
\end{equation}

The main advantage over \ours-\NaiveBU \ is that the model now learns cross-\EDITthree{series} relationships. \EDIT{In this paper we only \EDITthree{rely} on intuitive grouping like geographic proximity, but one could in principle employ more sophisticated methods like clustering to define the groups}.  To optimize the learning objective we use stochastic gradient descent, and sample train series batches at the group level.

\subsection{Forecast Inference}\label{sec:forecast_inference}

As mentioned earlier, model inference from composite likelihood estimation suffers from problems similar to model misspecification. \EDIT{This is because of the independence assumed across sub-spaces. In our model, the maximum composite likelihood estimates do not understand how mixture components learnt for one sub-space relate to mixture components learnt for a different subspace. However, we need to identify mixture components across sub-spaces in order to define the joint distribution in Equation~(\ref{eq:joint_distribution}) across all bottom level time series. Knowing this joint distribution is at the crux of forecast inference from our model. Fortunately, there is a natural way for us to resolve this issue. Both in the \ours-\NaiveBU\ composite likelihood in Equation~(\ref{eq:composite_likelihood_naiveBU}) and in the \ours-\GroupBU\ composite likelihood in Equation~(\ref{eq:composite_likelihood_groupBU}), the weights $\hat{\mathbf{w}}_{[k]}(\theta) \in \mathbb{R}^{N_{k}}$ are shared across all sub-spaces. We identify components with the same weight as belonging to the same multivariate sample, and hence providing the full joint distribution. We call this method \emph{weight matching}.}


For the \ours-\NaiveBU \ estimation, the weight matching method is \EDITthree{a strong statistical assumption, extrapolating from independent marginal distributions of each series to the full joint distribution}. The \ours-\GroupBU \ approach significantly alleviates this problem because the model parameters are well defined within each group $[g_i] \in \mathcal{G}$ and if most of the interesting correlations are already captured within each group, then much less burden is placed on the weight matching method. We show in the evaluation of Section \ref{section:experiments} that both \ours-\NaiveBU \ and \ours-\GroupBU \ models perform favorably when compared to other \EDITtwo{mean and probabilistic hierarchical} forecasting methods, and between the two, \EDIT{\ours-\GroupBU \ is generally more accurate when the time series grouping given is informative.}

\section{Deep Poisson Mixture Network} \label{section:pmmcnn}
Our primary goal is to create a \EDITtwo{probabilistic coherent} forecasting model that is accurate and efficient. 
For this purpose, we \EDITthree{opt} to extend the \MQForecaster\ family \citep{wen2017mqrcnn, eisenach2020mqtransformer}, \EDIT{proven by its history of industry service}, with the Poisson mixture distribution. We refer to this model as the \ourscomplete \ (\ours). \EDIT{Our \MQForecaster\ architecture selection is driven by its high computational efficiency, consequence of the \emph{forking-sequences} technique and multi-step forecasting strategy, \EDITthree{in} addition to its ability to incorporate static and \EDITthree{known} future temporal features.}

\begin{figure}[t]
\centering
\includegraphics[width=170mm]{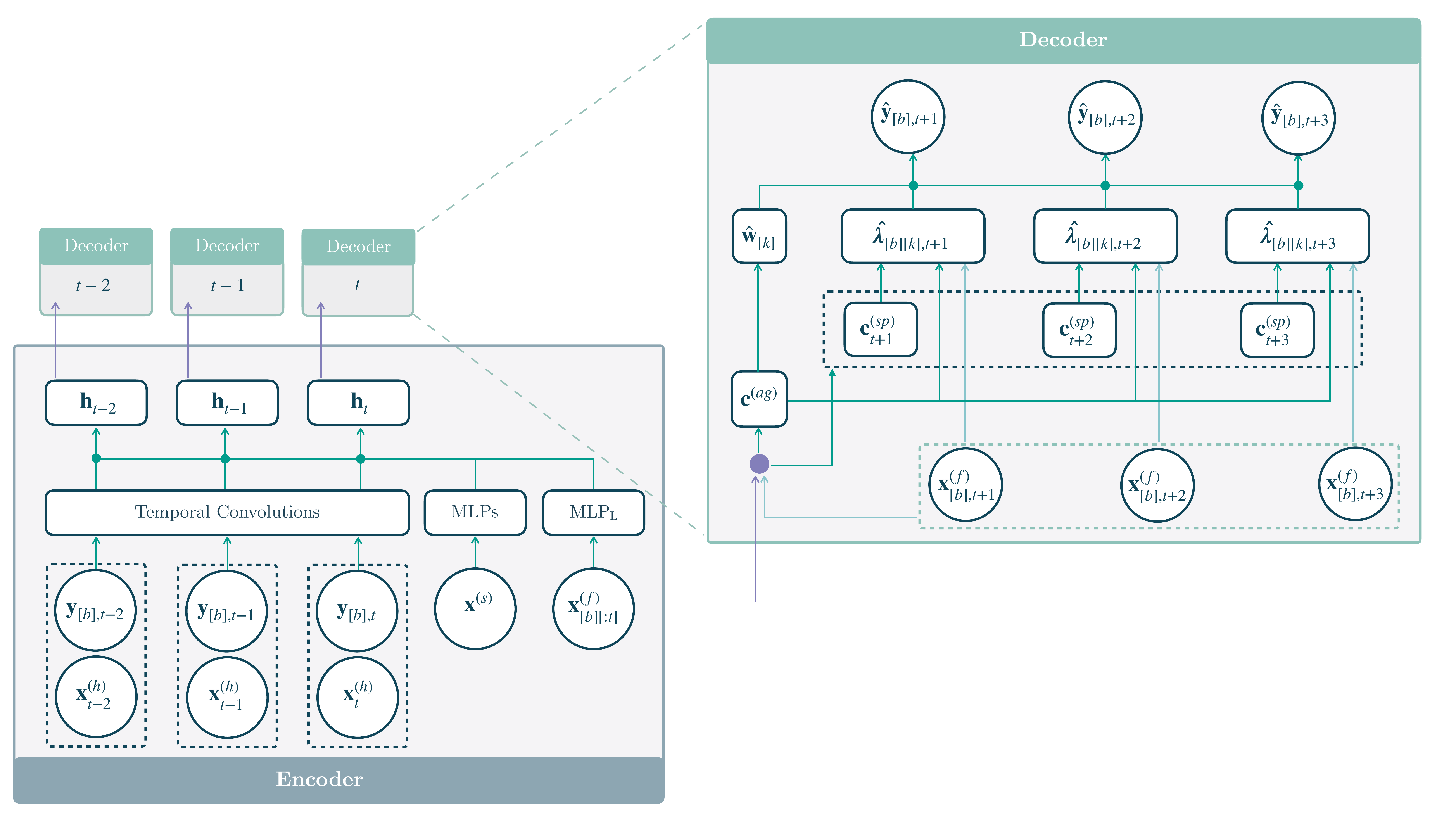}
\caption{The \ourscomplete \ (\ours) is a \emph{Sequence-to-Sequence with Context} network that uses dilated temporal convolutions as the primary encoder and multilayer perceptron based decoders for a direct \EDITthree{multi-step} forecast. The forked decoders share their parameters and create the \EDITthree{multivariate forecast distribution} for each time point in the encoder, making the architecture efficient in its optimization and \EDITtwo{predictions}.}
\label{fig:pmm_cnn}
\end{figure}

\subsection{Model Features}
\label{sec:hierarchical_forecasting_features}

As part of the innovations within our work, we propose to separate the bottom-level and aggregate-level features that we use in the \EDITtwo{forecasts}. \EDITthree{Sharing aggregate-level features across their respective bottom series simplifies the model's inputs and reduces redundant information, while greatly improving the model's memory usage efficiency.}

We follow the hierarchical forecasting literature practice for the \emph{static features} and \EDITthree{use} group identifiers implied by the hierarchy structure. We denote the static features, composed of features shared across the bottom series $\tilde{\mathbf{x}}^{(s)}$ and bottom level-specific features $\mathbf{x}_{[b]}^{(s)}$ by:
\begin{equation}
\mathbf{x}^{(s)} = \{\mathbf{x}^{(s)}_{[b]},\;\,\tilde{\mathbf{x}}^{(s)}\}.
\end{equation}

Regarding \emph{temporal features}, for the bottom level, we use the bottom series' past; for the aggregate level we use the parent node series' past. The future temporal information available can be \EDITthree{task specific  inputs} like prices or promotions \EDITthree{in the context of product demand forecasting}, or other simpler model forecasts \EDITthree{as inputs}, \EDITthree{such as \Naive\ or \SNaive\ that help the model predict series levels and seasonalities}. Similarly to the static features, we also distinguish the temporal shared features
$\tilde{\mathbf{x}}^{(h)}_{[:\EDITthree{t}]},\tilde{\mathbf{x}}^{(f)}_{[:\EDITthree{t+h}]}$
and the temporal bottom level-specific features $\mathbf{x}^{(h)}_{[b][t]},\mathbf{x}^{(f)}_{[b][:t+h]}$. With this consideration, we denote the historical and future temporal features \EDITthree{by:}

\begin{equation}
\mathbf{x}^{(h)}_{[:\EDITthree{t}]} = \{\mathbf{x}^{(h)}_{[b][:\EDITthree{t}]},\;\,\tilde{\mathbf{x}}^{(h)}_{[:\EDITthree{t}]}\} 
\qquad \text{and} \qquad
\mathbf{x}^{(f)}_{[:\EDITthree{t+h}]} = \{\mathbf{x}^{(f)}_{[b][:\EDITthree{t+h}]},\;\,\tilde{\mathbf{x}}^{(f)}_{[:\EDITthree{t+h]}}\}
\end{equation}

\subsection{Model Architecture}

In summary, the \ours \ builds on the \MQForecaster \ architecture that is based on \emph{Sequence-to-Sequence with Context} network (\SeqtoSeqC; \cite{cho2014Seq2SeqC}). 
The \ours \ uses dilated temporal convolutions (\TempConv; \cite{vandenoord2016wavenet}) to encode the available history into hidden states and uses forked decoders based on \emph{Multi-Layer Perceptron} (\MLP; \cite{rosenblatt1961principles}) in a direct multi-horizon forecast strategy \citep{atiya2016multi_step_forecasting}.  We describe below in further detail the components of the model. Other hyperparameter details are available in Table~\ref{table:model_parameters}.

\subsubsection{Encoder}

As explained earlier the \ours \ main encoder is a stack of dilated temporal convolutions. 
Additionally, we use a global dense layer to encode the static features and a local dense layer, shared across time, to encode the available future information. 
The encoder for each time $\EDITthree{t}$ and its components are described in Equation~(\ref{equation:encoders}). 


\begin{equation}
\begin{aligned}
    \mathbf{h}^{(h)}_{\EDITthree{t}}  &= \{\mathbf{h}_{1,\EDITthree{t}}^{(h)} \,,\, \mathbf{h}_{2,\EDITthree{t}}^{(h)}\} = \{\mathbf{TempConv}(\mathbf{x}^{(h)}_{[b][:\EDITthree{t}]}) \;,\; \mathbf{TempConv}(\tilde{\mathbf{x}}^{(h)}_{[:\EDITthree{t}]})\} \\
    \mathbf{h}^{(s)} &= \{\mathbf{h}_1^{(s)}\,,\,\mathbf{h}_2^{(s)}\} =  \{\mathbf{MLP}(\mathbf{x}^{(s)}_{[b]}) \;,\; \mathbf{MLP}(\tilde{\mathbf{x}}^{(s)})\} \\
    \mathbf{h}^{(f)}_{\EDITthree{t}}  &= \mathbf{MLP}_{L}(\mathbf{x}^{(f)}_{[b][:\EDITthree{t}+h]}) \\
    \label{equation:encoders}
\end{aligned}
\end{equation}
The encoder's output in Equation~(\ref{equation:encoders2}) is a set of shared and bottom level encoded features $\mathbf{h}_{1,\EDITthree{t}}$ and $\mathbf{h}_{2,\EDITthree{t}}$. The first concatenates all the encoded past $\mathbf{h}^{(h)}_{1,\EDITthree{t}},\,\mathbf{h}^{(h)}_{2,\EDITthree{t}} \in \mathbb{R}^{N_{cf}}$, static $\mathbf{h}^{(s)}_{1},\,\mathbf{h}^{(s)}_{2} \in \mathbb{R}^{N_{s}}$ and available future $\mathbf{h}^{(f)}_{\EDITthree{t}} \in \mathbb{R}^{N_{f}}$ information\footnote{\EDITthree{The local horizon-specific \MLP$_{L}$\ aligns future seasonalities and events and improves the forecast's sharpness.}}. The second one concatenates the encoded past $\mathbf{h}^{(h)}_{2,\EDITthree{t}} \in \mathbb{R}^{N_{cf}}$ and static $\mathbf{h}^{(s)}_{2} \in \mathbb{R}^{{N}_{s}}$ shared features.
\begin{equation}
\begin{aligned}
    \mathbf{h}_{\EDITthree{t}}        &= \{\mathbf{h}_{1,\EDITthree{t}},\; \mathbf{h}_{2,\EDITthree{t}}\} 
    = \{[\mathbf{h}^{(h)}_{1,\EDITthree{t}} | \mathbf{h}^{(h)}_{2,\EDITthree{t}} | \mathbf{h}^{(s)}_1 | \mathbf{h}^{(s)}_2 | \mathbf{h}^{(f)}_{\EDITthree{t}}],\;\;\, [\mathbf{h}^{(h)}_{2,\EDITthree{t}}| \mathbf{h}^{(s)}_2]\} \\    
    \label{equation:encoders2}
\end{aligned}
\end{equation}

\subsubsection{Forked Decoders}
The \ours \ uses a two-branch \MLP \ decoder. 
The first decoder branch, summarizes the encoder output and future available information into two contexts: The horizon-agnostic context set $\mathbf{c}^{(ag)} \in \mathbb{R}^{N_{ag}}$ and the horizon-specific context $\mathbf{c}^{(sp)}_{[\EDITthree{t}+1:\EDITthree{t}+h]} \in \mathbb{R}^{N_{sp}\times h}$ that provides structural awareness of the forecast horizon and plays a crucial role in expressing recurring patterns in the time series if any. Equation~(\ref{equation:forked_decoders1}) describes the first decoder branch: 



\begin{equation}
\begin{aligned}
    &\mathbf{c}^{(ag)} = \{\mathbf{c}^{(ag)}_1,\, \mathbf{c}^{(ag)}_2 \} = \{\mathbf{MLP}(h_{1,\EDITthree{t}}),\, \mathbf{MLP}(\mathbf{h}_{2,\EDITthree{t}})\} \\  &\mathbf{c}^{(sp)}_{[\EDITthree{t}+1:\EDITthree{t}+h]} = \mathbf{MLP}_{L}(\mathbf{h}_{1,\EDITthree{t}}) \\
    \label{equation:forked_decoders1}
\end{aligned}
\end{equation}    
The second decoder branch adapts the horizon-specific and horizon-agnostic contexts into the parameters of the Poisson mixture distribution. For the horizon-specific Poisson rates, we use the forking-sequence technique with a series of decoders with shared parameters for each time point \EDITthree{in $[t+1:t+h]$} and for the mixture weights, we apply an \MLP\ followed by a softmax on the aggregate horizon agnostic context. Equation~(\ref{equation:forked_decoders2}) describes the second decoder branch:


\begin{equation}
\begin{aligned}
    \hat{\blambda}_{[b][k][t+1:t+h]} &= \mathbf{MLP}_{L}(\mathbf{c}^{(ag)}_{1},\, \mathbf{c}^{(sp)}_{[\EDITthree{t}+1:\EDITthree{t}+h]},\, \mathbf{x}^{(f)}_{[b][\EDITthree{t}+1:\EDITthree{t}+h]}) \\
    \hat{\mathbf{w}}_{[k]} &= \mathrm{SoftMax}(\mathbf{MLP}(\mathbf{c}^{(ag)}_{2}))
    \label{equation:forked_decoders2}
\end{aligned}
\end{equation}

\section{Empirical Evaluation} \label{section:experiments}
\subsection{Hierarchical Forecasting Datasets}
\label{section:hierarchical_datasets}

To evaluate our method, 
we consider three forecasting tasks where the objective is to \EDITtwo{provide} quantile forecasts for each time series in the group or hierarchy structure. 
All the three datasets that we use in the empirical evaluation~\footnote{
\Traffic \ is available at the \href{https://archive.ics.uci.edu/ml/datasets/PEMS-SF}{\textcolor{blue}{UCI ML repository}}.
\TourismL \ is available at \href{https://robjhyndman.com/publications/mint/}{\textcolor{blue}{\MinT \ reconciliation web page}}. 
\Favorita \ is available in its \href{https://www.kaggle.com/c/favorita-grocery-sales-forecasting/}{\textcolor{blue}{Kaggle Competition url}}.} are publicly available and have been used in the hierarchical forecasting literature~\citep{wickramasuriya2019hierarchical_mint_reconciliation, taieb2019hierarchical_regularized_regression, rangapuram2021hierarchical_e2e,paria2021hierarchical_hired}. \EDIT{Table~\ref{table:datasets_summary} summarizes the datasets' characteristics\EDITthree{.}\footnote{\EDITtwo{We include more details for the \Traffic, \TourismL, and \Favorita\ datasets in \ref{section:dataset_details}}.}}

\begin{table*}[ht] 
\footnotesize
	\begin{center}
    \caption{Summary, hierarchical structure and forecast horizon of datasets used in our empirical study.}
    \label{table:datasets_summary}
	\begin{sc}
		\begin{tabular}{lcccccc}
			\toprule
			Dataset        & Total   & Aggregated    & Bottom     & Levels   & Observations & Horizon ($h$)\\
			\midrule
			\Traffic       & 207     & 7         & 200             & 4          &   366  &  1      \\
			\TourismL      & 555     & 175       & $76 \,/\, 304$  & $4\,/\, 5$ &   228  & 12      \\
			\Favorita      & 371,312 & 153,386   & 217,944         & 4          & 1,688  & 34      \\
			\bottomrule
		\end{tabular}
	\end{sc}
	\end{center}
	\vskip -0.1in
\end{table*}

\begin{figure}[t]
    \centering
    \subfigure[Traffic]{
    \label{fig:traffic}
    \includegraphics[width=0.23\linewidth]{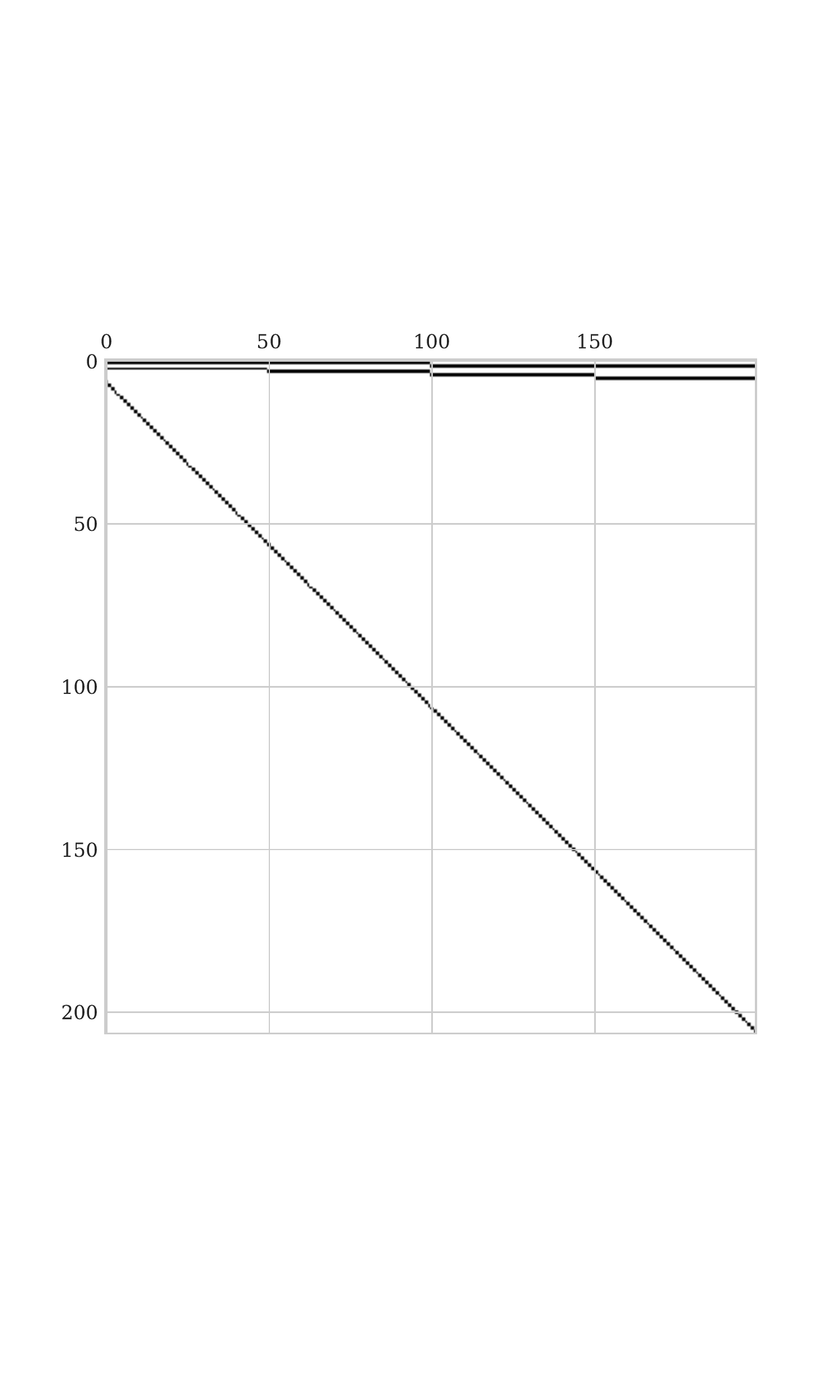}
    }
    \subfigure[Tourism]{
    \label{fig:tourism}
    \includegraphics[width=0.23\linewidth]{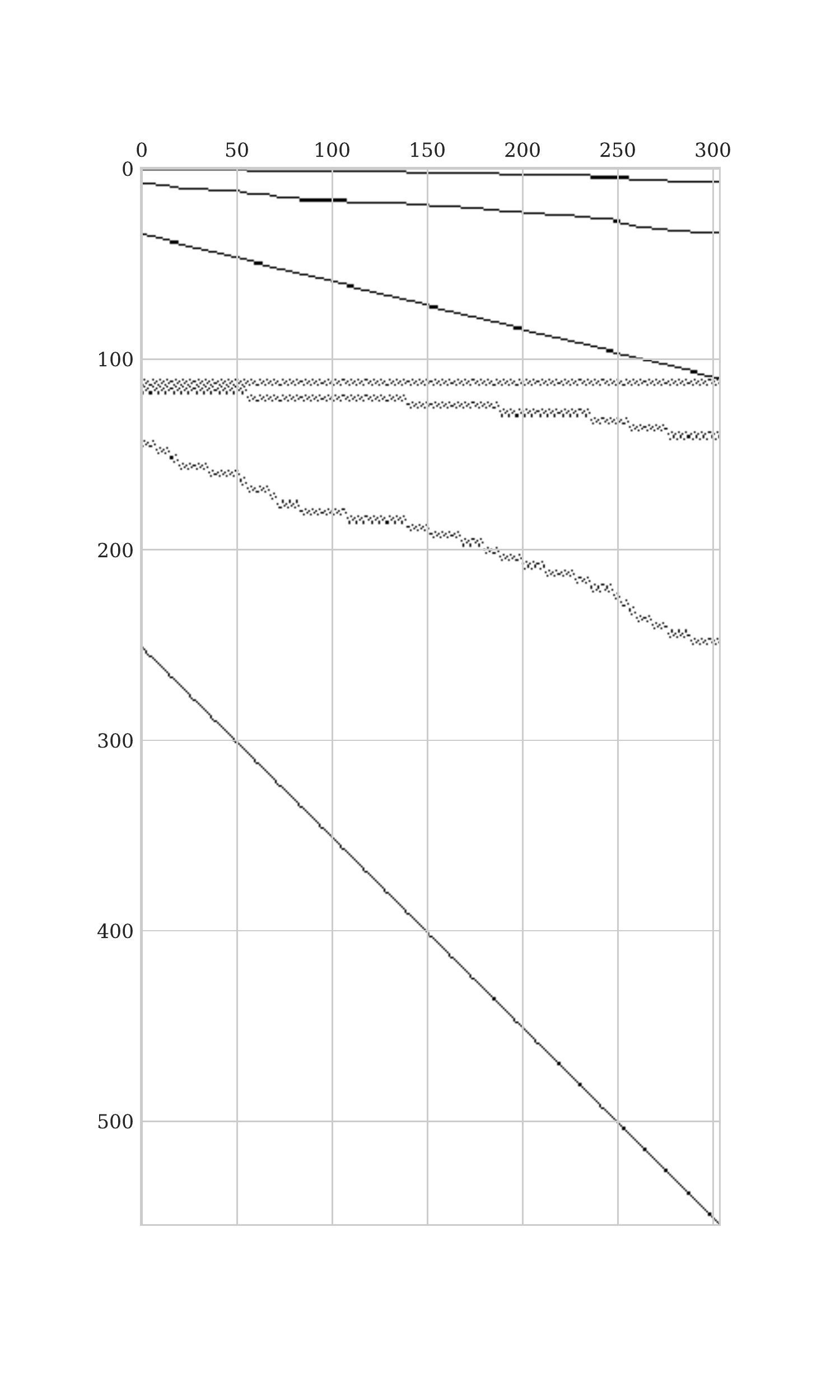}
    }    
    \subfigure[Favorita]{
    \includegraphics[width=0.23\linewidth]{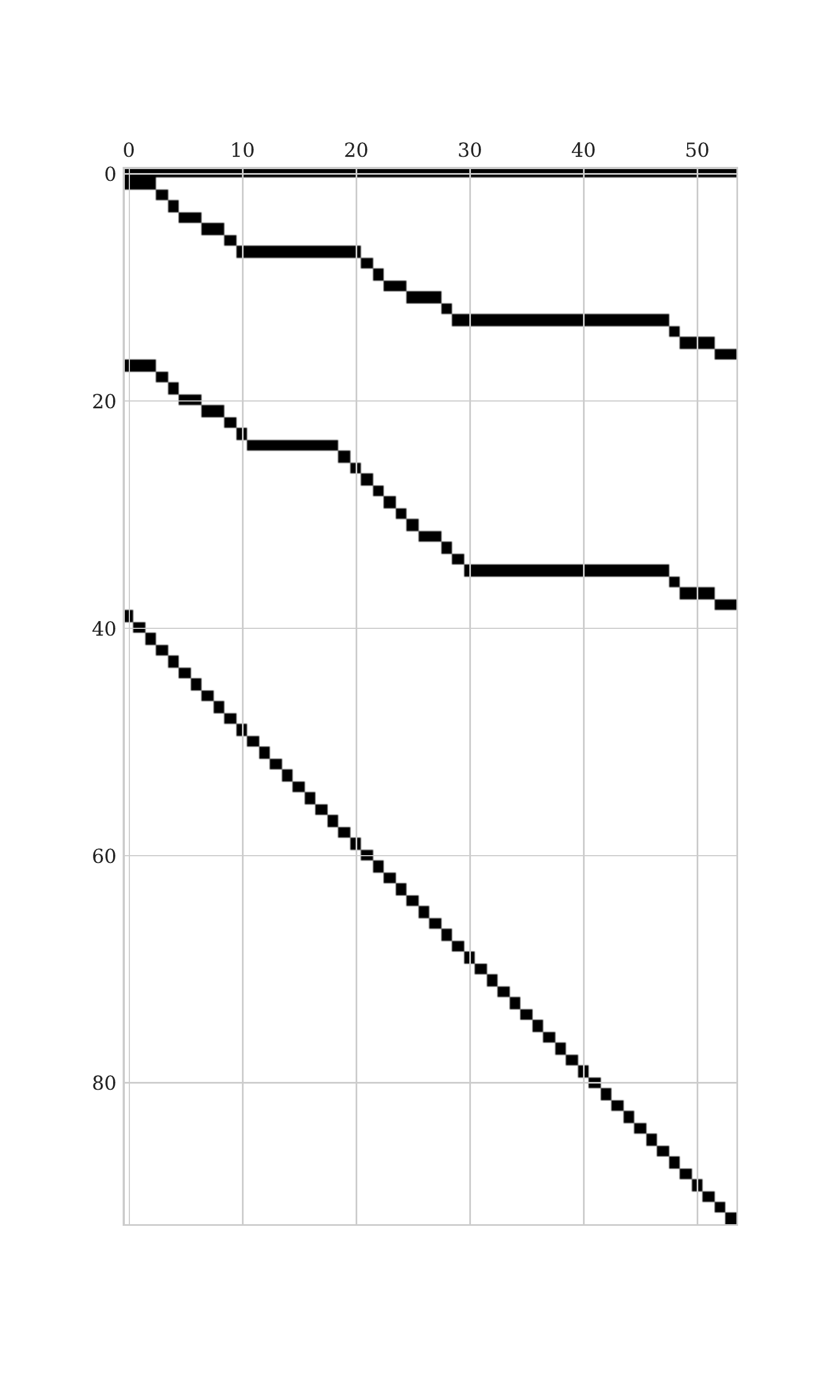}
    \label{fig:favorita}
    }
    \caption{\EDITtwo{Visualization of the hierarchical constraints of the empirical evaluation datasets. (a) \Traffic\ groups 200 highways' occupancy series into quarters, halves and total. (b) \TourismL\ groups its 555 regional visit series, into a combination \EDITthree{of} travel purpose, zones, states and country geographical aggregations. (c) \Favorita\ groups its grocery sales geographically, by store, \EDITthree{city, state, and country levels.}}}
    \label{fig:summation_matrices}
\end{figure}

The \TourismL~\citep{canberra2019tourism} is an Australian Tourism dataset that contains 555 monthly visit series from 1998 to 2016, grouped by \EDITthree{geographical regions} and travel purposes. \Favorita~\citep{favorita2018favorita_dataset} is a Kaggle competition dataset of grocery item sales daily history with additional information on promotions, items, stores, and holidays, containing 371,312 series from January 2013 to August 2017, with a geographic hierarchy of states, cities, and stores. \EDITtwo{We show their hierarchical constraints matrix in Figure~\ref{fig:summation_matrices}.}
\Traffic~\citep{dua2017traffic_dataset,taieb2019hierarchical_regularized_regression} measures the occupancy of \EDITtwo{200} car lanes in the San Francisco Bay Area, \EDITtwo{randomly} grouped into a year of daily observations with 207 series hierarchical structure. 

The datasets provide an opportunity to showcase the broad applicability of the \ours, as each has unique characteristics. \TourismL \ allows us to test the \ours \ to model group structures with multiple hierarchies. \Favorita \ allows us to test the \ours \ on a large-scale dataset. \Traffic \  is composed of randomly assigned hierarchical groupings that may not have any informative structures for the \ours \ to learn with \GroupBU. Finally, \Favorita \ contains some non-count demand values \EDIT{(grocery produce sold by weight)} and \Traffic \ aggregated occupancy rates are non-count data, so modeling these datasets with a Poisson mixture limits the maximum accuracy we can achieve.

\begin{figure}[tbp]
\centering
\subfigure[\ours-\NaiveBU]{\label{fig:a}
\includegraphics[width=78mm]{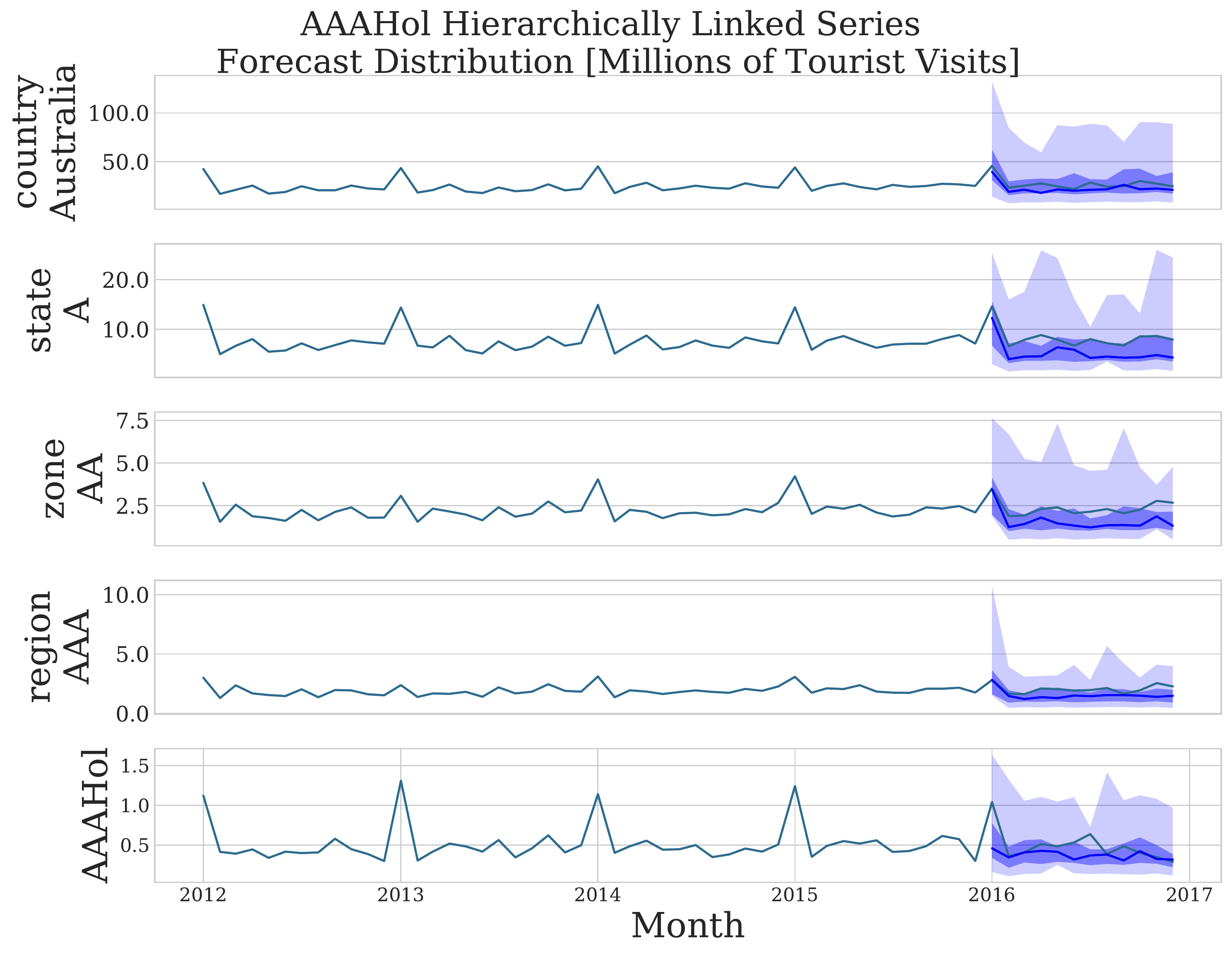}
\label{fig:pmmcnn_naivebu_hierarchical_predictions}
}
\subfigure[\ours-\GroupBU]{\label{fig:b}
\includegraphics[width=78mm]{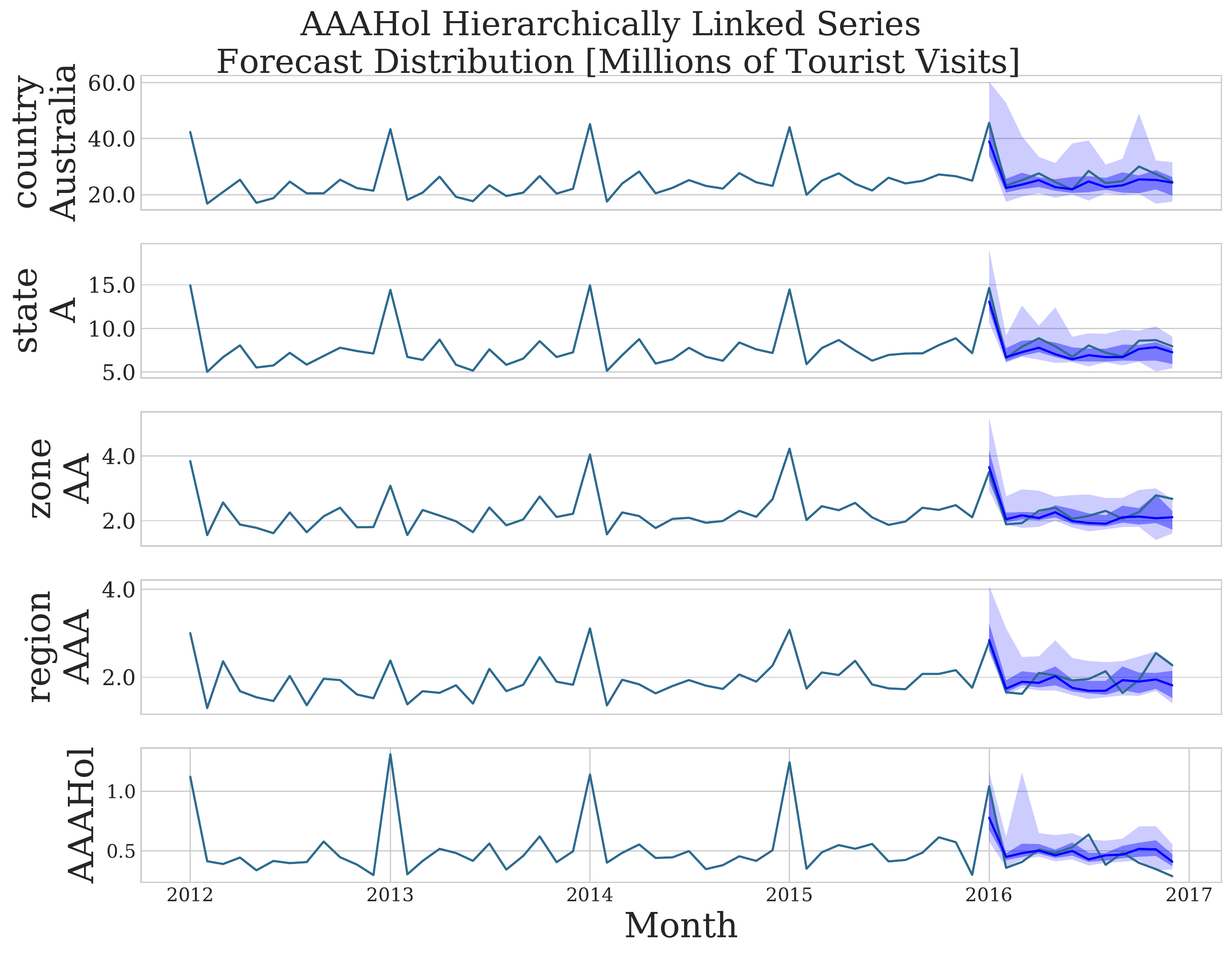}
\label{fig:pmmcnn_groupbu_hierarchical_predictions}
}
\caption{\ours-\NaiveBU\ and \ours-\GroupBU\ forecast distributions on a \TourismL\ hierarchically linked time series.
The top row shows total tourist visits in Australia, the second row shows the visits to Australia for the North South Wales state (A), the third row shows the holiday visits in the metropolitan Area of New South Wales (AA), the fourth row shows the Sidney total visits (AAA), the final row shows the holiday visits to Sidney. Forecast distributions, 99\% and 75\% prediction intervals in light and dark blue.}
\label{fig:forecast_comparison}
\end{figure}

\subsection{Time Series Covariance Modeling}

\EDITthree{We present in this subsection an illustrative example demonstrating how \ours\ leverages the flexibility and expressiveness of the multivariate Poisson mixture distribution to capture interesting correlations present in hierarchical time series datasets to improve forecast sharpness. Figure~\ref{fig:forecast_comparison} shows a comparison of forecasts generated by the \ours-\NaiveBU\ and the \ours-\GroupBU\ methods at various aggregation levels of the \TourismL\ dataset. The \ours-\GroupBU\ method accurately estimates correlations in bottom-level series, improving the forecast distribution concentration of the upper-level series. In contrast, the \ours-\NaiveBU\ method performs well on disaggregated series and mean forecasts, but suffers from significant model misspecification which reduces the sharpness of forecasts at the aggregated levels. We find that the \ours-\GroupBU\ generally does better when informative group series structure is present in the data, like in \TourismL\ and \Favorita\ datasets. \ours-\NaiveBU\ produces comparable results at disaggregated levels of all datasets and outperforms hierarchical forecasting baselines when hierarchical structure is noisy or not informative, as in the \Traffic\ dataset. Our intuitions are validated by the empirical results presented in Section~\ref{subsection:forecasting_results}.}

\begin{figure*}[ht]
\centering
\includegraphics[width=0.95\linewidth]{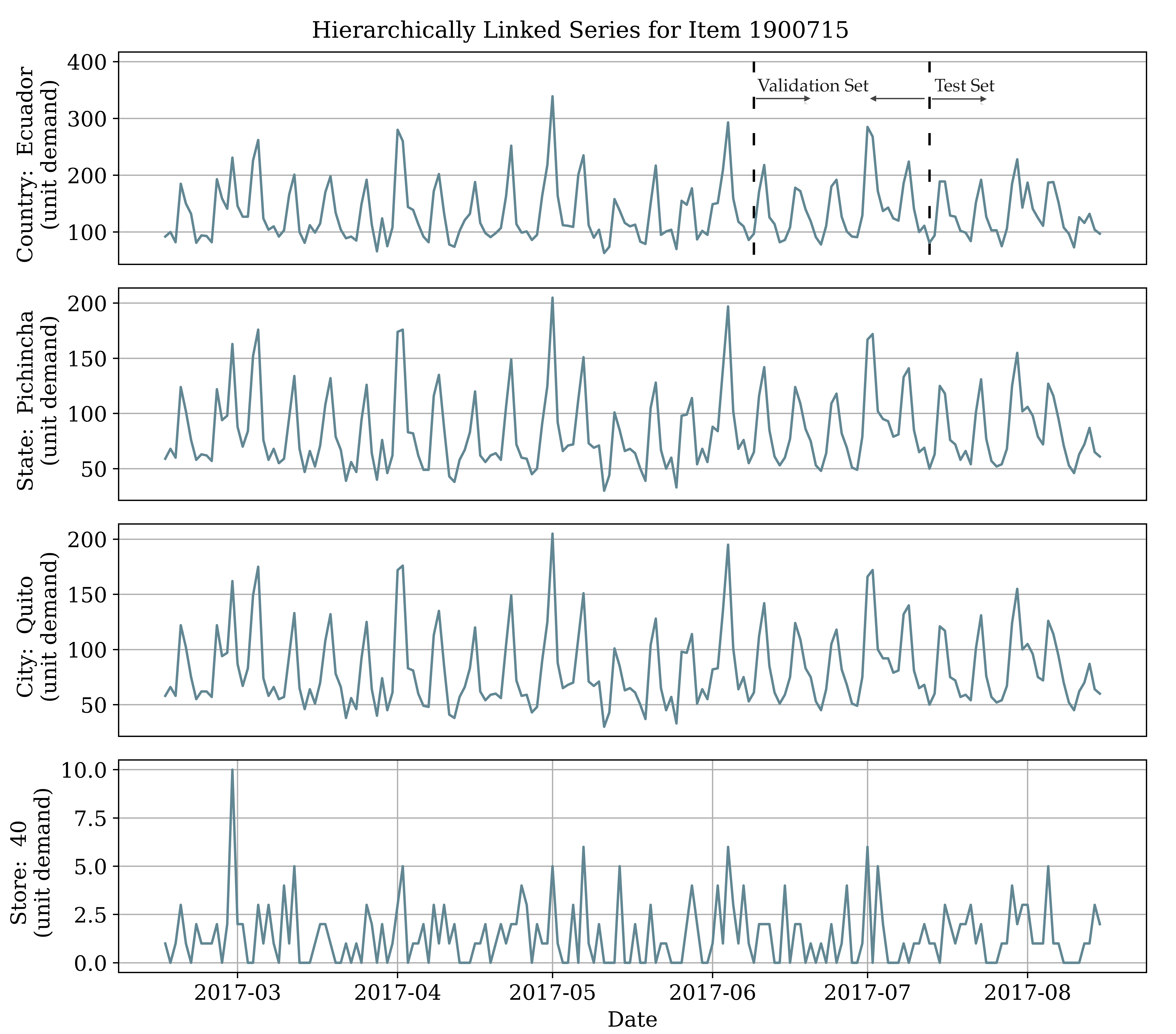} 
\caption{Example of a \Favorita\ geographically linked time series. The top level shows the sales for a grocery item in the country of Ecuador. The second level shows the sold units within Pichincha state, the third level shows the sales for Quito city, the final level shows Store 40 item sales. For this dataset, the training set comprises all the observations preceding the validation and test sets. The validation set (between the first and second dotted lines) is the 34 days before the test set. The held-out test set (marked by the last dotted line) is the last 34 observations.} \label{fig:train_methodology}
\end{figure*}

\subsection{Datasets Partition and Preprocessing}
\label{section:partition_preprocess}

\EDITtwo{For the main experiments we separate} the train, validation and test datasets' \emph{partition} as follows: we hold out the final horizon-length observations as the \emph{test set}. 
In a sliding-window fashion, we use the horizon-length that \EDITtwo{precedes} the \emph{test set} as the \EDITtwo{\emph{validation set}} and \EDITthree{treat the rest of the past information as the \emph{training set}}. 
A \emph{partition} example is depicted in Figure~\ref{fig:train_methodology}.  

For comparability purposes with the most recent hierarchical forecasting literature, we keep ourselves as close as possible to the preprocessing and wrangling of the datasets to that of \cite{rangapuram2021hierarchical_e2e}\footnote{\EDIT{The pre-processed datasets are available in the hierarchical forecasting extension to the \href{https://github.com/rshyamsundar/gluonts-hierarchical-ICML-2021}{\textcolor{blue}{\GluonTS \ library}.}}}. 
In general, the \emph{static} variables that we consider on all the datasets correspond to the hierarchical and group designators as categorical variables implied by the hierarchical constraint matrix. 
The \emph{temporal} covariates that we consider are the time series for the upper levels of the hierarchy, as well as calendar covariates associated with the time series frequency of each dataset. 
As \emph{future} data, we include calendar covariates to help the \ours \ capture seasonalities.


\clearpage

\begin{table}[t]
\caption{Considered hyperparameters for the \ourscomplete \ (\ours). The learning rate, random seed, and SGD epochs  that performed best on the validation set were selected automatically in each \HYPEROPT \ run. \EDITtwo{The remaining model parameters were configured once per dataset, as explained in \ref{section:parameter_details}}.\\
\EDIT{\tiny{\textsuperscript{*} 
The Parametrized Exponential Linear Unit (PeLU) modifies the ReLU activation improving the network's training speed \cite{clevert2015elu_activations}.
}}
}
\label{table:hyperparameters}
\tiny
\centering
    \begin{tabular}{ll}
    \toprule
    \textsc{Hyperparameter}                               & \textsc{Considered Values}           \\ \midrule
    Initial learning rate for SGD optimization.           & $\mathrm{lr}  \in \{0.00001, \dots ,0.01\}$    \\
    SGD full passes to dataset (epochs).                  & $\mathrm{n\_epochs} \in \{10, \dots, 3000\}$  \\
    Random seed that controls initialization of weights.  & $\mathrm{seed\_train} \in \{1, \dots,10\}$    \\ \hline
    SGD Batch Size.                                       & $\mathrm{batch\_size} \in \{4, \dots, 100\}$  \\    
    Activation Function.                                  & PeLU\tiny{\textsuperscript{*}}       \\
    Temporal Convolution Kernel Size.                     & $N_{ck} \in \{2,\,7\}$               \\
    Temporal Convolution Layers.                          & $N_{cl} \in \{3,\,5\}$               \\ 
    Temporal Convolution Filters.                         & $N_{cf} \in \{10,\,30\}$             \\    
    Future Encoder Dimension.                             & $N_{f} \in \{50\}$                   \\
    Static Encoder Dimension.                             & $N_{s} \in \{100\}$                  \\
    Horizon Agnostic Decoder Dimensions.                  & $N_{ag} \in \{50\}$                  \\
    Horizon Specific Decoder Dimensions.                  & $N_{sp} \in \{20\}$                  \\
    Poisson Mixture Weights Decoder Layers.               & $N_{wdl} \in \{3, 4\}$               \\ 
    Poisson Mixture Rate Decoder Layers.                  & $N_{rdl} \in \{2, 3, 4\}$            \\ 
    Local Decoder Dimensions.                             & $N_{k} \in \{25,\,50,\, 100\}$       \\
    \bottomrule
    \end{tabular}
\end{table}

\subsection{Evaluation Metrics}

The \EDIT{primary evaluation metric} of the model's \EDITtwo{forecasts} is based on the \emph{quantile loss / pinball loss} (QL)~\citep{matheson1976evaluation_crps}. \EDITthree{For a given a forecast creation date $t$ and horizon indexes $\tau \in [t+1:t+h]$} consider the estimated cumulative distribution function $\hat{F}_{i,\tau}$ of the variable $\EDITthree{Y_{i,\tau}}$ and its observation $y_{i,\tau}$, the loss is defined as: 
\begin{equation}
\label{equation:quantile_loss}
    \mathrm{QL}(\hat{F}_{i,\tau}, y_{i,\tau})_{q} = 
    \left(\mathbbm{1}\{y_{i,\tau}\leq\hat{F}^{-1}_{i,\tau}(\,q\,)\}-q\right)
    \left(\hat{F}^{-1}_{i,\tau}(\,q\,)-y_{i,\tau}\right)    
\end{equation}
We summarize the evaluation, for convenience of exposition and to ensure the comparability of our results with the existing literature, using the \emph{continuous ranked probability score}, abbreviated as CRPS~\citep{matheson1976evaluation_crps}\footnote{In practice the evaluation of the CRPS uses numerical integration technique, that discretizes the quantiles and treats the integral with a left Riemann approximation, averaging over uniformly distanced quantiles.}. 
We use the following mean scaled CRPS~\EDITtwo{\citep{bolin2019scaled_crps,makridakis2022m5_uncertainty}} version:
\EDITthree{
\begin{align}
\label{equation:crps}
\mathrm{CRPS}(\hat{F}_{[i\,],\tau},\mathbf{y}_{[i\,],\tau}) = \frac{2}{|[\,i\,]|} \sum_{i}
    \int^{1}_{0}
    \mathrm{QL}(\hat{F}_{i,\tau}, y_{i,\tau})_{q} dq \\ 
\mathrm{sCRPS}(\hat{F}_{[i\,],\tau},\mathbf{y}_{[i\,],\tau}) = 
    \frac{\mathrm{CRPS}(\hat{F}_{[i\,],\tau},\mathbf{y}_{[i\,],\tau})}{\sum_{i} | y_{i,\tau} |}
\end{align}
}

\EDITtwo{The CRPS measures the forecast distributions' accuracy} and has desirable properties~\EDITtwo{\citep{gneiting2011comparing,bolin2019scaled_crps}}. \EDIT{For instance it is a \emph{proper scoring rule}, since for any \EDITtwo{forecast} distribution $\hat{F}_{i,\tau}$ and true distribution $F_{i,\tau}$, the expected score satisfies:
\begin{equation}
  \mathbb{E}_{Y_{i,\tau}}\left[\mathrm{CRPS}(F_{i,\tau},\EDITthree{Y_{i,\tau}})\right] \leq \mathbb{E}_{Y_{i,\tau}}\left[ \mathrm{CRPS}(\hat{F}_{i,\tau},\EDITthree{Y_{i,\tau}})\right]  
\end{equation}
which implies that it will prefer an ideal probabilistic forecasting system over any other.
}

The main focus of the paper is probabilistic coherent forecasting, \EDITthree{and the main results comparing sCRPS of \ours\ to other hierarchical forecasting methods are presented in Section~\ref{subsection:forecasting_results}. We complement the main results with a comparison of mean hierarchical forecasts accuracy in Section~\ref{subsection:complementary_mean_accuracy}. It demonstrates the robustness of our method in point forecasting tasks as well.}

\subsection{Training Methodology and Hyperparameter Optimization}
\label{section:training_methodology}

\EDITtwo{For the overall hyperparameter selection, we used a standard two-stage approach where we first fixed the architecture, and the estimated probability distribution, and a second stage where we optimized the architecture’s training procedure. Keeping a second stage explored hyperparameter space small  serves two purposes: It keeps space exploration computationally tractable and showcases DPMN’s robustness, broad applicability, and accuracy with minor modifications. We defer some hyperparameter selection details to \ref{section:parameter_details}.}

\EDITtwo{In the first stage we select the number of \ours's mixture components that are responsible for single-series forecasting and modeling bottom-level correlations, as stated in Section~\ref{section:estimation_inference} and shown in \ref{sec:componence_covariance_expressiveness}. For each dataset, we selected the components optimally using temporal cross-validation in \ref{sec:num_components_ablation_study} ablation study, where we found that complex correlation structures favored a higher number of components. To observe the effects of modeling the series covariance, we compared \ours-\GroupBU\ and \ours-\NaiveBU\ variants.}

In the second stage, as shown in Table~\ref{table:hyperparameters}, the hyperparameter space that we consider for optimization is minimal. We only tune the learning rate, random seed to escape underperforming local minima, and the number of SGD epochs as a form of regularization \citep{yuan2007early_stopping}. During the \emph{hyperparameter optimization phase}, we measure the model sCRPS performance on the validation set described in Section~\ref{section:partition_preprocess}, and use \HYPEROPT \ ~\citep{bergstra2011hyperopt}, a Bayesian optimization library, to efficiently explore the hyperparameters based on the validation measurements.


 After the optimal hyperparameters are determined, we estimate the model parameters again by shifting the training window forward, noted as the \emph{retrain phase},  and predict for the final test set. We refer to the combination of the \emph{hyperparameter optimization} and \emph{retrain} phases as a \emph{run}. The \ours \ is implemented using \MXNet~\citep{chen2015mxnet}. To train the network, we minimize the negative log-composite likelihood\ variant from Section~\ref{section:estimation_inference}, using stochastic gradient descent with \emph{Adaptive Moments} (\ADAM; \citealt{kingma2014adam_method}).

\begin{table*}[tp]
\tiny
\centering
\caption{Empirical evaluation of \EDITtwo{probabilistic coherent} forecasts. Mean \emph{scaled continuous ranked probability score} (sCRPS) averaged over 8 runs, at each aggregation level, the best result is highlighted (lower \EDIT{measurements are preferred}). \EDIT{Methods without standard deviation have deterministic solutions.} \\ 
\tiny{
\EDITtwo{\textsuperscript{*} The \HierEtoE\ results differ from \cite{rangapuram2018deep_state_space}, sCRPS quantile interval space has granularity of 1 percent over its original 5 percent.}\\
\textsuperscript{**} 
\PERMBU-\MinT \ on \TourismL \ is unavailable because the original implementation, currently can't be applied to structures beyond single hierarchies.
}
}
\label{table:empirical_evaluation}
\setlength\tabcolsep{3.0pt}
\begin{tabular}{cc|cccc|cc}
\toprule
\textsc{Dataset} 
& \textsc{Level} & \thead{\ours-\GroupBU \\ (coherent)}    & \thead{\ours-\NaiveBU \\ (coherent)}    & \thead{\HierEtoE\textsuperscript{*} \\ (coherent)}         & \thead{\PERMBU-\MinT\textsuperscript{**} \\ (coherent)} & \thead{\ARIMA \\ (not coherent)} & \thead{\GLMPoisson \\ (not coherent)} \\
\midrule
\multirow{4}{*}{\Traffic}
 & Overall       & \EDIT{$0.0907 \pm 0.0024$}           & $\EDIT{0.0704 \pm 0.0014}$           & \EDITtwo{$\mathbf{0.0375 \pm 0.0058}$}   & $0.0677 \pm 0.0061$                  & \EDIT{0.0751} & \EDIT{0.0771}  \\
 & 1 (geo.)      & \EDIT{$0.0397 \pm 0.0044$}           & $\EDIT{\mathbf{0.0134 \pm 0.0022}}$  & \EDITtwo{$0.0183 \pm 0.0091$}            & $0.0331 \pm 0.0085$                  & \EDIT{0.0376} & \EDIT{0.0063}  \\
 & 2 (geo.)      & \EDIT{$0.0537 \pm 0.0024$}           & $\EDIT{0.0289 \pm 0.0017} $          & \EDITtwo{$\mathbf{0.0183 \pm 0.0081}$}   & $0.0341 \pm 0.0081$                  & \EDIT{0.0412} & \EDIT{0.0194}  \\
 & 3 (geo.)      & \EDIT{$0.0538 \pm 0.0022$}           & $\EDIT{0.0290 \pm 0.0011}$           & \EDITtwo{$\mathbf{0.0209 \pm 0.0071}$}   & $0.0417 \pm 0.0061$                  & \EDIT{0.0549} & \EDIT{0.0406}  \\
 & 4 (geo.)      & \EDIT{$0.2155 \pm 0.0022$}           & $\EDIT{0.2101 \pm 0.0008}$           & \EDITtwo{$\mathbf{0.0974 \pm 0.0021}$}   & $0.1621 \pm 0.0027$                  & \EDIT{0.1665} & \EDIT{0.2420}  \\
\midrule
\multirow{8}{*}{\TourismL}
& Overall        & $\EDIT{\mathbf{0.1249 \pm 0.0020}}$  & $\EDIT{0.1274 \pm 0.0028}$           & \EDITtwo{$0.1472 \pm 0.0029$}            & -                                    & \EDIT{0.1416} & \EDIT{0.1762}  \\
& 1 (geo.)       & $\EDIT{\mathbf{0.0431 \pm 0.0042}}$  & $\EDIT{0.0514 \pm 0.0030}$           & \EDITtwo{$0.0842 \pm 0.0051$}            & -                                    & \EDIT{0.0263} & \EDIT{0.0854}  \\
& 2 (geo.)       & $\EDIT{\mathbf{0.0637 \pm 0.0032}}$  & $\EDIT{0.0705 \pm 0.0026}$           & \EDITtwo{$0.1012 \pm 0.0029$}            & -                                    & \EDIT{0.0904} & \EDIT{0.1153}  \\
& 3 (geo.)       & $\EDIT{0.1084 \pm 0.0033}$           & $\EDIT{\mathbf{0.1068 \pm 0.0019}}$  & \EDITtwo{$0.1317 \pm 0.0022$}            & -                                    & \EDIT{0.1389} & \EDIT{0.1691}  \\
& 4 (geo.)       & $\EDIT{0.1554 \pm 0.0025}$           & $\EDIT{\mathbf{0.1507 \pm 0.0014}}$  & \EDITtwo{$0.1705 \pm 0.0023$}            & -                                    & \EDIT{0.1878} & \EDIT{0.2165}  \\
& 5 (prp.)       & $\EDIT{\mathbf{0.0700 \pm 0.0038}}$  & $\EDIT{0.0907 \pm 0.0061}$           & \EDITtwo{$0.0995 \pm 0.0061$}            & -                                    & \EDIT{0.0770} & \EDIT{0.0954}  \\
& 6 (prp.)       & $\EDIT{\mathbf{0.1070 \pm 0.0023}}$  & $\EDIT{0.1175 \pm 0.0047}$           & \EDITtwo{$0.1336 \pm 0.0042$}            & -                                    & \EDIT{0.1270} & \EDIT{0.1682}  \\
& 7 (prp.)       & $\EDIT{0.1887 \pm 0.0032}$           & $\EDIT{\mathbf{0.1836 \pm 0.0038}}$  & \EDITtwo{$0.1955 \pm 0.0025$}            & -                                    & \EDIT{0.2022} & \EDIT{0.2458}  \\
& 8 (prp.)       & $\EDIT{0.2629 \pm 0.0034}$           & $\EDIT{\mathbf{0.2481 \pm 0.0026}}$  & \EDITtwo{$0.2615 \pm 0.0016$}            & -                                    & \EDIT{0.2834} & \EDIT{0.3134}  \\
\midrule
 \multirow{4}{*}{\Favorita}  
& Overall        & \EDIT{$\mathbf{0.4020 \pm 0.0182}$} & \EDIT{$0.5301 \pm 0.0120$}            & \EDITtwo{$0.5298 \pm 0.0091$}            & \EDIT{$0.4670 \pm 0.0096$}           & \EDIT{0.4373} & \EDIT{0.4524}  \\
& 1 (geo.)       & \EDIT{$0.2760 \pm 0.0149$}          & \EDIT{$0.4166 \pm 0.0195$}            & \EDITtwo{$0.4714 \pm 0.0103$}            & \EDIT{$\mathbf{0.2692 \pm 0.0076}$}  & \EDIT{0.3112} & \EDIT{0.3611}  \\
& 2 (geo.)       & \EDIT{$0.3865 \pm 0.0207$}          & \EDIT{$0.5128 \pm 0.0108$}            & \EDITtwo{$0.5182 \pm 0.0107$}            & \EDIT{$\mathbf{0.3824 \pm 0.0092}$}  & \EDIT{0.4183} & \EDIT{0.4398}  \\
& 3 (geo.)       & \EDIT{$\mathbf{0.4068 \pm 0.0206}$} & \EDIT{$0.5317 \pm 0.0115$}            & \EDITtwo{$0.5291 \pm 0.0129$}            & \EDIT{$0.6838 \pm 0.0108$}           & \EDIT{0.4446} & \EDIT{0.4598}  \\
& 4 (geo.)       & \EDIT{$\mathbf{0.5387 \pm 0.0253}$} & \EDIT{$0.6594 \pm 0.0150$}            & \EDITtwo{$0.6012 \pm 0.0131$}            & \EDIT{$0.5532 \pm 0.0116$}           & \EDIT{0.5749} & \EDIT{0.5490}  \\
\bottomrule
\end{tabular}
\end{table*}

\clearpage
\subsection{Probabilistic Forecasting Results}
\label{subsection:forecasting_results}


We compare against the \EDITtwo{forecasts} of the following probabilistic methods across the hierarchical levels: \EDIT{(1) \HierEtoE~\citep{rangapuram2021hierarchical_e2e} that combines DeepVAR with hierarchical constraints on a multivariate normal\footnote{The \HierEtoE\ and \PERMBU-\MinT\  baseline models are available in a \href{https://github.com/rshyamsundar/gluonts-hierarchical-ICML-2021}{\textcolor{blue}{\GluonTS \ library}} extension.}, (2) \PERMBU-\MinT~\footnote{The original \PERMBU-\MinT\ is implemented in \href{https://github.com/bsouhaib/prob-hts}{\textcolor{blue}{supplementary material}} of the work of \cite{taieb2017coherent_prob_forecasts}}~\citep{taieb2017coherent_prob_forecasts}\ that synthesizes hierarchy levels' information with a probabilistic hierarchical aggregation of \ARIMA\ \EDITtwo{forecasts}, (3) automatic \ARIMA~\citep{hyndman2008automatic_arima} that performs a step-wise exploration of ARIMA models using AIC, and (4) \GLMPoisson\ a special case of generalized linear models regression suited for count data \citep{nelder1972generalized_linear}}.

For our proposed methods, we report the \ours-\NaiveBU \ and the \ours-\GroupBU. As described in Section~\ref{section:estimation_inference}, the \ours-\NaiveBU \ treats the bottom level series as independent observations, and the \ours-\GroupBU \ considers groups of time series during its composite likelihood estimation. Both methods obtain probabilistic coherent \EDITtwo{forecasts} using the bottom-up reconciliation. The comparison of the \ours \ variants serves as an ablation experiment to better analyze the source of the accuracy improvements. It also showcases the ability of the Poisson Mixture model to give good results for unseen hierarchical structures, 
and in the case of the \Traffic \ dataset, of uninformative or noisy time-series group structure, to explore the limits of the \GroupBU \ estimation method.

Table~\ref{table:empirical_evaluation} contains the sCRPS measurements for the predictive distributions at each \EDITthree{aggregate} level through the whole dataset hierarchy. The top row reports the overall sCRPS score (averaged across all the hierarchy levels). We highlight the best result in \textbf{bolds}. The \ours \ significantly and consistently improves the overall sCRPS for \TourismL \ and \Favorita. In particular, the \ours-\GroupBU \ variant shows improvements of \EDIT{\TourismLgains} \EDITthree{against} the second-best alternative in the \TourismL \ dataset and \EDIT{\Favoritagains} \EDITthree{against} the second-best choice in the \Favorita \ dataset. 
In the \Traffic \ dataset, the \ours-\GroupBU \ variant does not benefit from modeling the uninformative correlations between highways, and subsequently does not improve upon the other compared methods.
\EDIT{We hypothesize holiday features explain the \Traffic\ New Year's day performance gap between \HierEtoE's and alternative approaches. As neither \ours\ nor other baselines use these features.}
The \ours-\NaiveBU \ variant performs well on \Traffic \ relative to statistical baselines, and gives an acceptable performance on \TourismL \ and \Favorita \ compared to all alternatives.

Our results confirm observations from the community that a shared model, capable of learning from all the time series jointly, improves the \EDITtwo{forecasts} over those from univariate time series methods. Additionally, the qualitative  comparison\footnote{Figure~\ref{fig:pmmcnn_naivebu_hierarchical_predictions} and \ref{fig:pmmcnn_groupbu_hierarchical_predictions} show a qualitative exploration of \ours-\NaiveBU\ and \ours-\GroupBU\ versions.} between the \NaiveBU \ and \GroupBU \ methods shows that an expressive joint distribution framework capable of leveraging the hierarchical structure of the data, when informative, benefits the \EDITtwo{forecasts}' accuracy.



\clearpage
\begin{table*}[htp]
\tiny
\centering
\caption{\EDIT{Empirical evaluation of \EDITtwo{mean hierarchical} forecasts. \emph{Mean squared scaled error} (MSSE) averaged over 8 runs, at each aggregation level, the best result is highlighted (lower measurements are preferred). Methods without standard deviation have deterministic solutions.\\
\tiny{\textsuperscript{*} The \ARIMA-\ERM \ results for \TourismL \ differ from \cite{rangapuram2021hierarchical_e2e}, as we improved the numerical stability of their implementation.}}
}
\label{table:empirical_evaluation_msse}
{
\setlength\tabcolsep{3.3pt}
\begin{tabular}{cc|ccccc|ccc}
\toprule
\textsc{Dataset} 
& \textsc{Level}     & \thead{\ours-\GroupBU \\ (hier.)}  & \thead{\ours-\NaiveBU \\ (hier.)}      & \thead{\ARIMA-\ERM \textsuperscript{*}  \\ (hier.)}   & \thead{\ARIMA-\MinT-ols \\ (hier.)}   & \thead{\ARIMA-\NaiveBU \\ (hier.)} & \thead{\ARIMA \\ (not hier.)}          & \thead{\GLMPoisson \\ (not hier.)}   & \thead{\SNaive \\ (not hier.)} \\
\midrule
\multirow{4}{*}{\Traffic}   
& Overall            & $0.1750 \pm 0.0099$               & $\mathbf{0.0168 \pm 0.0026}$          & 0.0199                   &  0.0425                       &  0.0217                    &  0.0433                         &  0.0175                         & 0.0709                    \\
& 1 (geo.)           & $0.1619 \pm 0.0099$               & $\mathbf{0.0033 \pm 0.0026}$          & 0.0133                   &  0.0344                       &  0.0168                    &  0.0302                         &  0.0001                         & 0.0547                    \\
& 2 (geo.)           & $0.1835 \pm 0.0101$               & $0.0240 \pm 0.0027 $                  & 0.0135                   &  0.0380                       &  0.0180                    &  0.0392                         &  $\mathbf{0.0109}$              & 0.0676                    \\
& 3 (geo.)           & $0.1819 \pm 0.0100$               & $\mathbf{0.0239 \pm 0.0027}$          & 0.0373                   &  0.0647                       &  0.0295                    &  0.0850                         &  0.0462                         & 0.0989                    \\
& 4 (geo.)           & $0.9964 \pm 0.043$                & $0.9561 \pm 0.0022$                   & 0.6355                   &  0.5876                       &  $\mathbf{0.5669}$         &  $\mathbf{0.5669}$              &  1.2119                         & 1.3118                    \\
\midrule
\multirow{8}{*}{\TourismL}  
& Overall            & $\mathbf{0.1113 \pm 0.0158}$      & $0.2680 \pm 0.0748$                   & 0.1178                   &  0.1251                       &  0.2979                    &  0.1414                           &  0.1944                         &  $0.1306$      \\
& 1 (geo.)           & $0.0597 \pm 0.0212$               & $0.3371 \pm 0.1506$                   & 0.0596                   &  0.0472                       &  0.4002                    &  $\mathbf{0.0343}$                &  0.2015                         &  0.0582                 \\
& 2 (geo.)           & $\mathbf{0.1121 \pm 0.0152}$      & $0.3186 \pm 0.1130$                   & 0.1293                   &  0.1476                       &  0.3340                    &  0.2530                           &  0.2274                         &  0.1628                 \\
& 3 (geo.)           & $\mathbf{0.2250 \pm 0.0196}$      & $0.3909 \pm 0.0822$                   & 0.2529                   &  0.3556                       &  0.4238                    &  0.4429                           &  0.3913                         &  0.3695                 \\
& 4 (geo.)           & $\mathbf{0.2980 \pm 0.0197}$      & $0.4198 \pm 0.0668$                   & 0.3236                   &  0.4288                       &  0.4012                    &  0.4835                           &  0.4238                         &  0.4766                 \\
& 5 (prp.)           & $0.0798 \pm 0.0195$               & $0.1459 \pm 0.0177$                   & 0.0895                   &  0.0856                       &  0.1703                    &  0.0973                           &  0.0961                         &  $\mathbf{0.0615}$      \\
& 6 (prp.)           & $\mathbf{0.1403 \pm 0.0150}$      & $0.1576 \pm 0.0113$                   & 0.1466                   &  0.1537                       &  0.1986                    &  0.1663                           &  0.1840                         &  0.1577                 \\
& 7 (prp.)           & $\mathbf{0.2654 \pm 0.0212}$      & $0.2537 \pm 0.0100$                   & 0.2705                   &  0.3017                       &  0.3151                    &  0.2914                           &  0.3293                         &  0.3699                 \\
& 8 (prp.)           & $0.3302 \pm 0.0235$               & $\mathbf{0.3030 \pm 0.0083}$          & 0.3543                   &  0.3970                       &  0.3769                    &  0.3769                           &  0.3908                         &  0.4969                 \\
\midrule  
\multirow{4}{*}{\Favorita}  
& Overall            & $\mathbf{0.7563 \pm 0.0713}$      & $0.9533 \pm 0.0201$                   & 0.8163                   &  0.9465                       & 0.8276                     &  0.9665                           & 0.8346                          &  1.1420                 \\
& 1 (geo.)           & $\mathbf{0.7944 \pm 0.0568}$      & $0.9188 \pm 0.0187$                   & 0.8362                   &  0.8999                       & 0.8415                     &  0.9217                           & 0.9054                          &  1.1269                 \\
& 2 (geo.)           & $\mathbf{0.7355 \pm 0.1057}$      & $1.0451 \pm 0.0310$                   & 0.7830                   &  1.0057                       & 0.8050                     &  1.0451                           & 0.8037                          &  1.1078                 \\
& 3 (geo.)           & $\mathbf{0.7303 \pm 0.1035}$      & $1.0317 \pm 0.0333$                   & 0.7986                   &  1.0418                       & 0.8192                     &  1.0881                           & 0.8003                          &  1.1315                 \\
& 4 (geo.)           & $\mathbf{0.6770 \pm 0.0351}$      & $0.8090 \pm 0.0180$                   & 0.8199                   &  0.8808                       & 0.8228                     &  0.8228                           & 0.6499                          &  1.2815                 \\
\bottomrule
\end{tabular}
}
\end{table*}


\subsection{Complementary Mean Forecasting Results}
\label{subsection:complementary_mean_accuracy}

\EDIT{As shown in Section~\ref{section:pmm}, the \ours's multivariate \EDITthree{Poisson mixture} defines a \EDITtwo{probabilistic coherent} system for its forecast distributions; the mean hierarchical coherence is naturally implied. In this experiment, we compare \ours\ mean hierarchical \EDITtwo{forecasts} (weighted average of Poisson rates) with the following point forecasting methods' \EDITtwo{forecasts}: 
(1) \ARIMA-\ERM\ \citep{taieb2019hierarchical_regularized_regression} \EDITtwo{that} performs an optimization-based reconciliation free of the unbiasedness assumption of the base forecasts, (2) \ARIMA-\MinT\ \citep{wickramasuriya2019hierarchical_mint_reconciliation} meant to reconcile unbiased independent forecasts and minimize the variance of the forecast errors, (3) \ARIMA-\NaiveBU\ \citep{orcutt1968hierarchical_bottom_up} that produces univariate bottom-level time-series \EDITtwo{forecasts} independently and then sums them according to the hierarchical constraints, (4) automatic \ARIMA\ \citep{hyndman2008automatic_arima}, (5) \GLMPoisson\ \citep{nelder1972generalized_linear} (6) and the \SNaive\ model. 

To evaluate we take recommendations from \cite{hyndman2006another_look_measures} and define the \emph{Mean Square Scaled Error} (MSSE) based on the following Equation:}
\begin{equation}
\label{equation:msse}
    \mathrm{MSSE}(\mathbf{y}_{[i],\tau},\; \mathbf{\hat{y}}_{[i],\tau},\; \mathbf{\tilde{y}}_{[i],\tau}) = \frac{\mathrm{MSE}(\mathbf{y}_{[i],\tau},\; \mathbf{\hat{y}}_{[i],\tau})}{\mathrm{MSE}(\mathbf{y}_{[i],\tau},\; \mathbf{\tilde{y}}_{[i],\tau})} 
\end{equation}

\EDIT{where $\mathbf{y}_{[i],\tau},\;\mathbf{\hat{y}}_{[i],\tau},\; \mathbf{\tilde{y}}_{[i],\tau} \in \mathbb{R}^{N \times \EDITthree{h}}$ represent the time series observations, the mean \EDITtwo{forecasts} and the \Naive\ baseline \EDITtwo{forecasts} respectively. 

Table~\ref{table:empirical_evaluation_msse} contains the MSSE measurements for the predicted means at each aggregation level. The top row reports the overall MSSE (averaged across all the hierarchy levels). We highlight the best result in \textbf{bolds}. \ours\ shows overall improvements or comparable results with the baselines'. With respect to mean hierarchical baselines \ours\ shows 4\% \Traffic\ improvements, 5\% \TourismL\ improvements , and \Favorita\ improvements of 7\%.}

\newpage
\section{Conclusions and Future Work} \label{section:conclusion}
In this work, we have introduced a novel method for \EDITtwo{coherent probabilistic} forecasting, 
the \ourscomplete \  (\ours), which focuses on learning the joint distribution of bottom level time series and naturally guarantees hierarchical probabilistic coherence. We have also shown through empirical evaluations that our model is accurate for count data. We observed overall significant improvements in sCRPS when compared with previous state-of-the-art probabilistically coherent models on \EDITthree{two} different hierarchical datasets, Australian domestic tourism (\EDIT{\TourismLgains}) and Ecuadorian grocery sales (\EDIT{\Favoritagains}). \EDIT{However, the model does not show improvement in sCRPS over alternative approaches when evaluated on San Francisco Bay Area traffic data.} 

The framework presented here is also extensible. We chose to focus on forecasting count data and used Poisson kernels, but one could also use Gaussian kernels to model joint distributions of real valued hierarchical data. In fact, any kernel which admits closed form expression for aggregated distributions under conditional independence \EDITtwo{akin to Equation~(\ref{eq:conditional_indep})} will work well, and it includes kernels like the Gamma and the Negative Binomial distributions in addition to the Poisson and the Gaussian distributions already mentioned. 

\EDIT{With respect to the definition of the groups considered in \EDITthree{\ours-\GroupBU, we followed the natural structure of the data and defined them based on } geographic proximity in this work. A promising line of research is an informed creation of such groups based on the series characteristics, for example via clustering.}

By formulating the model as a Mixture Density Network, we have separated the probabilistic model of the predictive distribution from the underlying network, making it compatible with any other archiecture. In the current paper we relied on the convolutional encoder version of the \MQForecaster\, architecture, but significant progress has been made in the last few years on neural network based forecasting models; for example, Transformer-based deep learning architectures \citep{eisenach2020mqtransformer} that can improve performance. We plan to explore both directions, new kernels and new neural network architectures in future work. 

\ours\, has its drawbacks as well. As is the case with any finite mixture model, the fidelity of the estimated distribution depends on the number of mixture components. A few hundred samples may be sufficient to describe a single marginal distribution but can be too sparse to describe the joint distribution in a high dimensional space. The sparsity will be particularly obvious if customers of hierarchical forecasting are interested in forecast distributions conditioned on partially observed data. The small number of samples will lead to overly confident posterior distributions. Another issue is the model misspecification during inference. The \emph{weight matching} method performs quite well in empirical evaluations but is somewhat unsatisfactory as a statistical model. To mitigate both issues we are exploring generative factor models where the mixture components are truly samples from an underlying distribution and correlations between marginal distributions will be captured by common factors. It will bring \ours\, closer to standard Hierarchical Bayesian formulation but with fewer and less strict assumptions.

\newpage
\bibliography{citations.bib}
\bibliographystyle{model5-names}

\clearpage
\appendix 
\label{section:appendix}
\setcounter{table}{0}
\setcounter{figure}{0}
\renewcommand{\thetable}{A\arabic{table}}

\EDITtwo{
\section{\ours's Probabilistic Coherence}
\label{sec:probabilistic_coherence}

In this Appendix we prove that \ours's probabilistic coherence. Given access to the joint bottom-level forecast probability $\mathbb{\hat{P}}_{[b]}$ \EDITthree{defined in Equation~\ref{eq:joint_distribution}}, and the \emph{aggregation rule} for $\mathbb{\hat{P}}_{[a]}$ \EDITthree{defined in Equation~\ref{eq:aggregation_rule_concise}}, the implied forecast probability for the hierarchical series $\mathbb{\hat{P}}_{[a,b]}$ is coherent and satisfies Definition~\ref{def:probabilistic_coherence}. The proof first shows that $\mathbb{\hat{P}}_{[a]}$ is well defined, and then shows that \ours's aggregate marginal probability assigns a zero probability to any set that does not contain any coherent forecasts, which implies probabilistic coherence.

\begin{lemma}
\label{def:probabilistic_coherence_lemma2} 
Let $(\Omega_{[b]}, \mathcal{F}_{[b]}, \mathbb{\hat{P}}_{[b]})$ be a probabilistic forecast space, with $\mathcal{F}_{[b]}$ a $\sigma$-algebra on $\Omega_{[b]}$. The aggregation rule defines a probability measure over $\Omega_{[a]} = \mathbf{A}_{[a][b]}(\Omega_{[b]})$:
\begin{equation}
    \mathbb{\hat{P}}_{[a]}(\mathbf{y}_{[a]}) = \int_{\Omega_{[b]}} \mathbb{\hat{P}}_{[b]}(\mathbf{y}_{[b]}) \mathbbm{1} \{\mathbf{y}_{[a]} = \mathbf{A}_{[a][b]} \mathbf{y}_{[b]}\} d \mathbf{y}_{[b]}
\end{equation}
\end{lemma}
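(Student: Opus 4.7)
The plan is to verify that the formula in the lemma defines a valid probability measure on $\Omega_{[a]}$ by establishing the three Kolmogorov axioms: non-negativity, normalization to one, and countable additivity. The key structural observation is that this expression is nothing other than the pushforward of $\mathbb{\hat{P}}_{[b]}$ under the linear aggregation map $\mathbf{A}_{[a][b]}$; once this correspondence is made, the three axioms follow from standard measure-theoretic manipulations. Since $\mathbf{A}_{[a][b]}$ is linear and therefore continuous and Borel-measurable, the induced $\sigma$-algebra $\mathcal{F}_{[a]} := \{\mathcal{A} \subseteq \Omega_{[a]} : \mathbf{A}_{[a][b]}^{-1}(\mathcal{A}) \in \mathcal{F}_{[b]}\}$ on $\Omega_{[a]} = \mathbf{A}_{[a][b]}(\Omega_{[b]})$ is well-defined, so the measure-theoretic setup is coherent.

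Non-negativity is immediate: the integrand is the product of the non-negative Poisson-mixture mass $\mathbb{\hat{P}}_{[b]}$ from Equation~\eqref{eq:joint_distribution} and a $\{0,1\}$-valued indicator. For normalization, I would integrate both sides over $\Omega_{[a]}$ and invoke Tonelli's theorem to interchange the order of integration,
\begin{align*}
\int_{\Omega_{[a]}} \mathbb{\hat{P}}_{[a]}(\mathbf{y}_{[a]}) \, d\mathbf{y}_{[a]}
&= \int_{\Omega_{[b]}} \mathbb{\hat{P}}_{[b]}(\mathbf{y}_{[b]}) \left[ \int_{\Omega_{[a]}} \mathbbm{1}\{\mathbf{y}_{[a]} = \mathbf{A}_{[a][b]} \mathbf{y}_{[b]}\} \, d\mathbf{y}_{[a]} \right] d\mathbf{y}_{[b]} = 1,
\end{align*}
where the inner integral equals one because $\mathbf{A}_{[a][b]} \mathbf{y}_{[b]}$ is a uniquely determined point of $\Omega_{[a]}$ by construction, and the outer integral equals one by the assumption that $\mathbb{\hat{P}}_{[b]}$ is a probability measure. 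Because the \ours\ forecast space is the discrete support $\mathbb{N}^{N_b \times h}$, these integrals are counting sums, and the interchange is justified without any further regularity assumption beyond the non-negativity of the integrand.

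Countable additivity is inherited from $\mathbb{\hat{P}}_{[b]}$: for any disjoint family $\{\mathcal{A}_i\} \subseteq \mathcal{F}_{[a]}$ the preimages $\mathbf{A}_{[a][b]}^{-1}(\mathcal{A}_i)$ are disjoint in $\mathcal{F}_{[b]}$, and summing the lemma's formula over $\mathbf{y}_{[a]} \in \bigcup_i \mathcal{A}_i$ decomposes as $\sum_i \mathbb{\hat{P}}_{[b]}(\mathbf{A}_{[a][b]}^{-1}(\mathcal{A}_i))$ by countable additivity of $\mathbb{\hat{P}}_{[b]}$; this is exactly $\sum_i \mathbb{\hat{P}}_{[a]}(\mathcal{A}_i)$. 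Together the three properties establish the claim.

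The main obstacle is a mild interpretive one rather than a technical obstruction. If one wished to extend the result beyond the discrete Poisson-mixture setting to continuous forecast distributions, the preimage $\mathbf{A}_{[a][b]}^{-1}(\{\mathbf{y}_{[a]}\})$ becomes a lower-dimensional affine subspace of $\Omega_{[b]}$ and the indicator-weighted integral must be read distributionally, for instance via a Dirac delta supported on the coherent subspace, or equivalently as a density with respect to the appropriate Hausdorff measure on $\Omega_{[a]}$. For the present paper this subtlety does not arise, and the pushforward reduces to a clean summation. The lemma naturally sets up the companion argument that the induced joint $\mathbb{\hat{P}}_{[a,b]}$ concentrates on the coherent subspace $\{(\mathbf{y}_{[a]}, \mathbf{y}_{[b]}) : \mathbf{y}_{[a]} = \mathbf{A}_{[a][b]} \mathbf{y}_{[b]}\}$, which is precisely what Definition~\ref{def:probabilistic_coherence} demands.
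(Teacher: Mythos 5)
Your proposal is correct and follows essentially the same route as the paper's own proof: both verify the three Kolmogorov axioms, with non-negativity from the integrand, normalization via interchanging the order of integration, and countable additivity inherited from $\mathbb{\hat{P}}_{[b]}$. Your explicit framing as a pushforward measure and your use of preimages $\mathbf{A}_{[a][b]}^{-1}(\mathcal{A}_i)$ in the additivity step are slightly cleaner than the paper's phrasing in terms of images $\mathbf{A}_{[a][b]}(\mathcal{B}_i)$, but the substance of the argument is the same.
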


\begin{proof}
We prove that $\mathbb{\hat{P}}_{[a]}$ satisfies the Kolmogorov axioms on $(\Omega_{[a]}, \mathcal{F}_{[a]}, \mathbb{\hat{P}}_{[a]})$ with $\Omega_{[a]} = \mathbf{A}_{[a][b]}(\Omega_{[b]})$.
\begin{enumerate}
    \item $\mathbb{\hat{P}}_{[a]}(\mathcal{A}) \geq 0 \quad \forall \mathcal{A} \in \mathcal{F}_{[a]}$ : This follows from the positivity of $\mathbb{\hat{P}}_{[b]}(\mathcal{B})$ and the indicator function.
    
    \item $\mathbb{\hat{P}}_{[a]}(\Omega_{[a]})=1$: The unit measure assumption holds because 
    \begin{align*}
        \mathbb{\hat{P}}_{[a]}(\mathbf{A}_{[a][b]}(\Omega_{[b]}) 
        = 
        \int_{\Omega_{[a]}} \int_{\Omega_{[b]}} \mathbb{\hat{P}}_{[b]}(\mathbf{y}_{[b]}) \mathbbm{1} \{\mathbf{y}_{[a]} = \mathbf{A}_{[a][b]} \mathbf{y}_{[b]}\} d \mathbf{y}_{[b]} d \mathbf{y}_{[a]} 
        =
        \int_{\Omega_{[b]}} \mathbb{\hat{P}}_{[b]}(\mathbf{y}_{[b]}) d \mathbf{y}_{[b]} = 1
    \end{align*}

    \item $\mathbb{\hat{P}}_{[a]}\left(\bigcup _{i=1}^{\infty }\mathcal{A}_{i}\right)=\sum _{i=1}^{\infty }\hat{\mathbb{P}}(\mathcal{A}_{i})$ for disjoint sets $\mathcal{A}_{i}$'s: The $\sigma$-additivity assumption holds
    \begin{align*}
        \hat{\mathbb{P}}_{[a]}\left(\bigcup _{i=1}^{\infty }\mathcal{A}_{i}\right) 
        &=
        \hat{\mathbb{P}}_{[a]}\left(\bigcup _{i=1}^{\infty }\mathbf{A}_{[a][b]}(\mathcal{B}_{i})\right)
        =
        \int \hat{\mathbb{P}}_{[b]}\left(\bigcup _{i=1}^{\infty }\mathcal{B}_{i}\right) \mathbbm{1} \{\mathbf{y}_{[a]} = \mathbf{A}_{[a][b]} \mathbf{y}_{[b]}\} d \mathbf{y}_{[b]} \\
        &=
        \int \hat{\mathbb{P}}_{[b]}\left(\bigcup _{i=1}^{\infty }\mathcal{B}_{i}\right) d \mathbf{y}_{[b]} 
        =
        \sum _{i=1}^{\infty }\hat{\mathbb{P}}(\mathcal{B}_{i})
        =
        \sum _{i=1}^{\infty }\hat{\mathbb{P}}(\mathcal{A}_{i})
    \end{align*}
\end{enumerate}
\end{proof}

\begin{lemma}
\label{def:probabilistic_coherence_lemma} 
Let $(\Omega_{[b]}, \mathcal{F}_{[b]}, \hat{\mathbb{P}}_{[b]})$ be a probabilistic forecast space, with $\mathcal{F}_{[b]}$ a $\sigma$-algebra on $\Omega_{[b]}$. 
If a forecast distribution assigns a zero probability to sets that don't contain coherent forecasts, it defines a coherent probabilistic forecast space $(\Omega_{[a,b]}, \mathcal{F}_{[a,b]}, \hat{\mathbb{P}}_{[a,b]})$ with $\Omega_{[a,b]} = \mathbf{S}_{[a,b][b]}(\Omega_{[b]})$.
\begin{equation}
    \hat{\mathbb{P}}_{[a]}\left(\mathbf{y}_{[a]} \notin \mathbf{A}_{[a][b]}(\mathcal{B})\;|\;\mathcal{B}\right) = 0 \implies
    \hat{\mathbb{P}}_{[a,b]}\left(\mathbf{S}_{[a,b][b]}(\mathcal{B})\right) = \hat{\mathbb{P}}_{[b]}\left(\mathcal{B}\right)
    \quad \forall \mathcal{B} \in \mathcal{F}_{[b]} 
\end{equation}
\end{lemma}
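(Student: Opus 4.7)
The plan is to construct the hierarchical measure $\hat{\mathbb{P}}_{[a,b]}$ explicitly as the pushforward of $\hat{\mathbb{P}}_{[b]}$ under the linear injection $\mathbf{S}_{[a,b][b]}$, use the zero-probability hypothesis to argue that this pushforward is the correct joint measure (i.e., concentrates on coherent configurations), and then derive the coherence identity from the injectivity of $\mathbf{S}_{[a,b][b]}$. Concretely, I would take $\Omega_{[a,b]} := \mathbf{S}_{[a,b][b]}(\Omega_{[b]})$, set $\mathcal{F}_{[a,b]} := \{\mathcal{C} \subseteq \Omega_{[a,b]} : \mathbf{S}_{[a,b][b]}^{-1}(\mathcal{C}) \in \mathcal{F}_{[b]}\}$, and define
$$\hat{\mathbb{P}}_{[a,b]}(\mathcal{C}) := \hat{\mathbb{P}}_{[b]}\!\bigl(\mathbf{S}_{[a,b][b]}^{-1}(\mathcal{C})\bigr) \quad\text{for every } \mathcal{C} \in \mathcal{F}_{[a,b]}.$$

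A routine check of Kolmogorov's axioms, mirroring the argument in Lemma~\ref{def:probabilistic_coherence_lemma2}, shows that $\hat{\mathbb{P}}_{[a,b]}$ is a valid probability measure: non-negativity follows from $\hat{\mathbb{P}}_{[b]}\geq 0$; the unit measure comes from $\mathbf{S}_{[a,b][b]}^{-1}(\Omega_{[a,b]}) = \Omega_{[b]}$, where one exploits that the bottom block of $\mathbf{S}_{[a,b][b]}$ is $\mathbf{I}_{[b][b]}$, making the map injective; and $\sigma$-additivity is inherited from $\hat{\mathbb{P}}_{[b]}$ because preimages commute with disjoint unions.

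Next I would argue that the hypothesis is what pins down $\hat{\mathbb{P}}_{[a,b]}$ as precisely this pushforward, as opposed to some larger measure that smears mass over incoherent atoms. The condition $\hat{\mathbb{P}}_{[a]}(\mathbf{y}_{[a]}\notin \mathbf{A}_{[a][b]}(\mathcal{B}) \mid \mathcal{B})=0$ says that conditional on the bottom sample belonging to $\mathcal{B}$, the aggregate lies in $\mathbf{A}_{[a][b]}(\mathcal{B})$ almost surely. Combined with the aggregation rule from Lemma~\ref{def:probabilistic_coherence_lemma2}, this forces all hierarchical mass to sit on the graph $\{(\mathbf{A}_{[a][b]}\mathbf{y}_{[b]}, \mathbf{y}_{[b]}) : \mathbf{y}_{[b]}\in\Omega_{[b]}\} = \mathbf{S}_{[a,b][b]}(\Omega_{[b]})$, ruling out joint measures that assign positive mass to incoherent configurations. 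With this in hand, for any $\mathcal{B}\in\mathcal{F}_{[b]}$ the injectivity of $\mathbf{S}_{[a,b][b]}$ yields $\mathbf{S}_{[a,b][b]}^{-1}(\mathbf{S}_{[a,b][b]}(\mathcal{B})) = \mathcal{B}$, and substituting into the definition gives the desired identity
$$\hat{\mathbb{P}}_{[a,b]}\!\bigl(\mathbf{S}_{[a,b][b]}(\mathcal{B})\bigr) = \hat{\mathbb{P}}_{[b]}(\mathcal{B}).$$

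The main obstacle is conceptual rather than computational: once the pushforward construction is in place, the coherence identity is immediate from injectivity, so the interesting question is where exactly the zero-probability hypothesis intervenes. I would therefore foreground the hypothesis in the step that excludes alternative extensions of $\hat{\mathbb{P}}_{[b]}$ to the hierarchical $\sigma$-algebra, e.g., joint measures whose $[a]$-marginal agrees with the aggregation rule of Lemma~\ref{def:probabilistic_coherence_lemma2} on average but still place mass on incoherent atoms. Without such an exclusion one could fabricate measures satisfying all Kolmogorov axioms on $(\Omega_{[a,b]}, \mathcal{F}_{[a,b]})$ yet violating the coherence identity. Making this uniqueness argument explicit, rather than treating the pushforward as the obvious choice, is the part that needs the most care.
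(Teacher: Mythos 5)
Your proposal is correct in substance but takes a genuinely different route from the paper. The paper does not construct $\hat{\mathbb{P}}_{[a,b]}$ as a pushforward; it works directly with a joint hierarchical measure, decomposes the image $\mathbf{S}_{[a,b][b]}(\mathcal{B})$ as the intersection of the cylinder set $\mathbf{A}_{[a][b]}(\mathcal{B})\times\mathbb{R}^{N_b}$ with $\mathbb{R}^{N_a}\times\mathcal{B}$, applies the conditional-probability multiplication rule to write $\hat{\mathbb{P}}_{[a,b]}(\mathbf{S}_{[a,b][b]}(\mathcal{B})) = \hat{\mathbb{P}}_{[a]}(\mathbf{A}_{[a][b]}(\mathcal{B})\mid\mathcal{B})\,\hat{\mathbb{P}}_{[b]}(\mathcal{B})$, and then uses the zero-probability hypothesis to show the conditional factor equals one. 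Your pushforward construction via $\hat{\mathbb{P}}_{[a,b]}(\mathcal{C}) := \hat{\mathbb{P}}_{[b]}(\mathbf{S}_{[a,b][b]}^{-1}(\mathcal{C}))$ together with injectivity of $\mathbf{S}_{[a,b][b]}$ (from its identity bottom block) is cleaner and avoids the paper's slightly loose set identity, which really only holds modulo null sets. The trade-off is that for the pushforward measure the coherence identity is true by construction, so the stated hypothesis plays no role in your main derivation; it only enters your informal uniqueness discussion, which is where the actual content of the implication lives. You correctly identify this, but the argument that the hypothesis "forces all mass onto the graph" and thereby excludes other extensions is exactly the step the paper makes quantitative via $1 - \hat{\mathbb{P}}_{[a]}(\mathbf{y}_{[a]}\notin\mathbf{A}_{[a][b]}(\mathcal{B})\mid\mathcal{B}) = 1$; making that explicit would close the remaining gap between proving existence of a coherent extension and proving that the model's own hierarchical measure is that extension.
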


\begin{proof} 
\begin{align*}
    \hat{\mathbb{P}}_{[a,b]}\left(\mathbf{S}_{[a,b][b]} (\mathcal{B}) \right)
    &= 
    \hat{\mathbb{P}}_{[a,b]}\left(
        \begin{bmatrix}
        \mathbf{A}_{[a][b]}\\
        \mathbf{I}_{[b][b]}
        \end{bmatrix}(\mathcal{B})
    \right) =
    \hat{\mathbb{P}}_{[a,b]}\left(
        \{
        \begin{bmatrix}
        \mathbf{A}_{[a][b]}(\mathcal{B})\\
        \mathbb{R}^{N_{b}}
        \end{bmatrix}\} 
        \cap
        \{
        \begin{bmatrix}
        \mathbb{R}^{N_{a}}\\
        \mathcal{B}
        \end{bmatrix}\}         
    \right) \\
    &= 
    \hat{\mathbb{P}}_{[a]}\left(
    \mathbf{A}_{[a][b]}(\mathcal{B})\;|\;\mathcal{B}
    \right)
    \hat{\mathbb{P}}_{[b]}\left(\mathcal{B}\right) =
    (1 - \hat{\mathbb{P}}_{[a]}\left(\mathbf{y}_{[a]} \notin \mathbf{A}_{[a][b]}(\mathcal{B})\;|\;\mathcal{B}\right))\times \hat{\mathbb{P}}_{[b]}\left(\mathcal{B}\right) = 
    \hat{\mathbb{P}}_{[b]}\left(\mathcal{B}\right)
\end{align*}
The first equality is the image of a set $\mathcal{B} \in \Omega_{[b]}$ corresponding the constraints matrix transformation, the second equality defines the spanned space as a subspace intersection of the aggregate series and the bottom series, the third equality uses conditional probability multiplication rule, the final equality uses the zero probability assumption.
\end{proof}

\begin{theorem}
\label{def:probabilistic_coherence_theorem} 
With $(\Omega_{[b]}, \mathcal{F}_{[b]}, \hat{\mathbb{P}}_{[b]})$ probabilistic forecast space, we can construct a coherent probabilistic forecast space $(\Omega_{[a,b]}, \mathcal{F}_{[a,b]}, \hat{\mathbb{P}}_{[a,b]})$ with Lemma \ref{def:probabilistic_coherence_lemma2}'s aggregation.
\end{theorem}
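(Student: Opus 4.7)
The plan is to chain the two preceding lemmas together, with the aggregation rule from Lemma~\ref{def:probabilistic_coherence_lemma2} serving as the bridge that feeds into the hypothesis of Lemma~\ref{def:probabilistic_coherence_lemma}. First I would take the given bottom-level forecast space $(\Omega_{[b]}, \mathcal{F}_{[b]}, \hat{\mathbb{P}}_{[b]})$ and invoke Lemma~\ref{def:probabilistic_coherence_lemma2} to obtain a well-defined marginal $\hat{\mathbb{P}}_{[a]}$ on $\Omega_{[a]} = \mathbf{A}_{[a][b]}(\Omega_{[b]})$. I would then assemble the joint space on $\Omega_{[a,b]} = \mathbf{S}_{[a,b][b]}(\Omega_{[b]})$ by defining $\hat{\mathbb{P}}_{[a,b]}$ through the bottom distribution together with the deterministic constraint $\mathbf{y}_{[a]} = \mathbf{A}_{[a][b]} \mathbf{y}_{[b]}$, so that each $\mathbf{y}_{[b]} \in \Omega_{[b]}$ is assigned mass $\hat{\mathbb{P}}_{[b]}$ at the unique point $(\mathbf{A}_{[a][b]} \mathbf{y}_{[b]}, \mathbf{y}_{[b]}) \in \Omega_{[a,b]}$.

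The key verification step is that the zero-probability hypothesis of Lemma~\ref{def:probabilistic_coherence_lemma}, namely $\hat{\mathbb{P}}_{[a]}(\mathbf{y}_{[a]} \notin \mathbf{A}_{[a][b]}(\mathcal{B}) \mid \mathcal{B}) = 0$ for all $\mathcal{B} \in \mathcal{F}_{[b]}$, is built into the construction of $\hat{\mathbb{P}}_{[a]}$. I would argue this directly from the integral formula: the indicator $\mathbbm{1}\{\mathbf{y}_{[a]} = \mathbf{A}_{[a][b]} \mathbf{y}_{[b]}\}$ forces every unit of mass contributed by $\mathbf{y}_{[b]} \in \mathcal{B}$ to be placed at a point $\mathbf{y}_{[a]}$ that, by definition of the image, already lies in $\mathbf{A}_{[a][b]}(\mathcal{B})$. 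Thus conditional on $\mathbf{y}_{[b]} \in \mathcal{B}$, the aggregate cannot realize a value outside $\mathbf{A}_{[a][b]}(\mathcal{B})$, which is exactly the hypothesis required.

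With both ingredients in place, applying Lemma~\ref{def:probabilistic_coherence_lemma} immediately yields $\hat{\mathbb{P}}_{[a,b]}(\mathbf{S}_{[a,b][b]}(\mathcal{B})) = \hat{\mathbb{P}}_{[b]}(\mathcal{B})$ for every $\mathcal{B} \in \mathcal{F}_{[b]}$, which is Definition~\ref{def:probabilistic_coherence}. Sets in $\mathbb{R}^{N_a + N_b}$ that do not intersect the image of $\mathbf{S}_{[a,b][b]}$ receive zero mass by construction, confirming that incoherent configurations are excluded.

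The main obstacle, and the only non-trivial bookkeeping, is making sure the $\sigma$-algebra $\mathcal{F}_{[a,b]}$ on the constrained subspace $\mathbf{S}_{[a,b][b]}(\Omega_{[b]})$ is set up so that the pushforward measure is well defined and so that the image sets $\mathbf{A}_{[a][b]}(\mathcal{B})$ and $\mathbf{S}_{[a,b][b]}(\mathcal{B})$ are measurable for every $\mathcal{B} \in \mathcal{F}_{[b]}$. This is standard since $\mathbf{A}_{[a][b]}$ and $\mathbf{S}_{[a,b][b]}$ are linear (hence measurable) maps, and the natural choice is to take $\mathcal{F}_{[a,b]}$ as the pushforward $\sigma$-algebra generated by $\mathbf{S}_{[a,b][b]}$ acting on $\mathcal{F}_{[b]}$. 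Once this measurability bookkeeping is done, the rest follows directly from the two lemmas already proved.
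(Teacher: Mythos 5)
Your proposal is correct and follows essentially the same route as the paper: verify that the aggregation rule forces $\hat{\mathbb{P}}_{[a]}\left(\mathbf{y}_{[a]} \notin \mathbf{A}_{[a][b]}(\mathcal{B})\;|\;\mathcal{B}\right) = 0$ (you argue this from the indicator in Lemma~\ref{def:probabilistic_coherence_lemma2}'s integral, the paper cites the aggregation rule of Section~\ref{sec:probabilistic_properties}), and then apply Lemma~\ref{def:probabilistic_coherence_lemma} to conclude coherence. Your additional remarks on constructing $\hat{\mathbb{P}}_{[a,b]}$ as a pushforward and on the measurability of the image sets make explicit some bookkeeping the paper leaves implicit, but do not change the argument.
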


\begin{proof}
It follows from \ref{sec:probabilistic_properties}'s \emph{aggregation rule} that $\hat{\mathbb{P}}_{[a]}\left(\mathbf{y}_{[a]} \notin \mathbf{A}_{[a][b]}(\mathcal{B})\;|\;\mathcal{B}\right) = 0$, using 
Lemma \ref{def:probabilistic_coherence_lemma} we obtain a probabilistic coherent space $(\Omega_{[a,b]}, \mathcal{F}_{[a,b]}, \hat{\mathbb{P}}_{[a,b]})$.
\end{proof}
}
\section{Covariance Formula}\label{appen:correlation}
Here we present the derivation of the covariance formula in Equation~(\ref{eq:covariance}). 

\begin{proof}
Using the law of total covariance, we get 

\begin{equation}
\label{equation:covariance_single}
\mathrm{Cov}(Y_{\beta, \tau}, Y_{\beta', \tau'}) 
=    
\mathbb{E}\left[\mathrm{Cov}(Y_{\beta, \tau}, Y_{\beta', \tau'}\vert \lambda_{\beta,\kappa,\tau}, \lambda_{\beta',\kappa,\tau'} ) \right] + 
\mathrm{Cov}\left(\mathrm{E}\left[Y_{\beta, \tau} \vert \lambda_{\beta,\kappa,\tau}\right], \mathbb{E}\left[Y_{\beta', \tau'} \vert \lambda_{\beta',\kappa,\tau'}\right] \right)
\end{equation}

Using the conditional independence from Equation~(\ref{eq:conditional_indep}). We can rewrite the expectation of the conditional covariance:
\EDITthree{
\begin{eqnarray}
\mathbb{E}\left[\mathrm{Cov}(Y_{\beta,\tau}, Y_{\beta',\tau'}\vert \lambda_{\beta,\kappa,\tau}, \lambda_{\beta',\kappa,\tau'} ) \right]
&=& 
\mathbb{E}\left[\mathrm{Var}(Y_{\beta,\tau}\vert \lambda_{\beta,\kappa,\tau} ) \right]\mathbbm{1}(\beta=\beta')\mathbbm{1}(\tau=\tau') \nonumber \\
&=& \mathbb{E}\left[ \lambda_{\beta,\kappa,\tau}\right]\mathbbm{1}(\beta=\beta')\mathbbm{1}(\tau=\tau') \nonumber \\
&=& \overline{\lambda}_{\beta,\tau} \mathbbm{1}(\beta=\beta')\mathbbm{1}(\tau=\tau')
\end{eqnarray}
}
\EDITthree{
where $\overline{\lambda}_{\beta,\tau} = \sum_{\kappa=1}^{N_k} w_\kappa \lambda_{\beta, \kappa, \tau} $.\\

In the second term, because the conditional distributions are Poisson we have 

$$\mathrm{E}\left[Y_{\beta, \tau} \vert \lambda_{\beta,\kappa,\tau}\right] = \lambda_{\beta,\kappa,\tau} \quad \text{ and } \quad \mathrm{E}\left[Y_{\beta',\tau'} \vert \lambda_{\beta',\kappa,\tau'}\right] = \lambda_{\beta',\kappa,\tau'}$$

Which implies

\begin{equation}
\label{equation:covariance_pair}
\mathrm{Cov}\left(\mathbb{E}\left[Y_{\beta, \tau} \vert \lambda_{\beta,\kappa,\tau}\right], \mathbb{E}\left[Y_{\beta', \tau'} \vert \lambda_{\beta',\kappa,\tau'}\right] \right)
=
\sum_{\kappa=1}^{N_k} w_\kappa 
\left(\lambda_{\beta,\kappa,\tau} - \bar{\lambda}_{\beta,\tau}\right) 
\left( \lambda_{\beta',\kappa,\tau'} - \bar{\lambda}_{\beta',\tau} \right)    
\end{equation}
}

\EDITthree{
Therefore, the covariance formula is:
\begin{equation*}
\mathrm{Cov}(Y_{\beta, \tau}, Y_{\beta', \tau'}) = \overline{\lambda}_{\beta,\tau} \EDITthree{\mathbbm{1}(\beta=\beta')\mathbbm{1}(\tau=\tau')} + \sum_{\kappa=1}^{N_k} w_\kappa \left(\lambda_{\beta,\kappa,\tau} - \overline{\lambda}_{\beta,\tau}\right) \left( \lambda_{\beta',\kappa,\tau'} - \overline{\lambda}_{\beta',\tau'}\right)    
\end{equation*}
}
\end{proof}

\clearpage
\section{Mixture Components and Covariance Matrix Rank}
\label{sec:componence_covariance_expressiveness}
As mentioned in Section~\ref{section:training_methodology}, complex correlations across series benefit from a higher number of mixture components. We ground this intuition on the expressiveness of \ours'\EDITthree{s} Poisson Mixture covariance matrix controlled by its rank. We can show that for \ours'\EDITthree{s finite Poisson mixture} distribution, the bottom level's estimator of the \EDITthree{\emph{non-diagonal}} covariance matrix series is a matrix of rank at most $K-1$ given by:

\begin{equation}
\label{equaton:covariance_multivariate}
    \mathrm{Cov}(\mathbf{y}_{[b],\tau}) = 
    \sum^{K}_{\kappa=1} \mathbf{w}_{\kappa}
    (\blambda_{[b],\kappa,\tau}-\bar{\blambda}_{[b],\tau})
    (\blambda_{[b],\kappa,\tau}-\bar{\blambda}_{[b],\tau})^{\intercal} \in \mathbb{R}^{N_{b} \times N_{b}}
\end{equation}
\begin{proof}
One can easily extend the pair-wise covariance from Equation~(\ref{equation:covariance_pair}) to multivariate covariance Equation~(\ref{equaton:covariance_multivariate}).\\

Let $\mathbf{z}_{\kappa} = \EDITthree{\mathbf{w}_{\kappa}}(\blambda_{[b],\kappa,\tau}-\bar{\blambda}_{[b],\tau})$, by construction we can show that $\sum^{K}_{\kappa=1} \mathbf{z}_{\kappa} = 0$.

Rewriting the last vector $\mathbf{z}_{K} = - \sum^{K-1}_{\kappa=1} \mathbf{z}_{k}$ we obtain a sum of $K-1$ rank-1 matrices
$$
\sum^{K}_{\kappa=1} \EDITthree{\frac{1}{\mathbf{w}_{\kappa}}}\mathbf{z}_{\kappa} \mathbf{z}^{\intercal}_{\kappa} = 
\sum^{K-1}_{\kappa=1} \EDITthree{\frac{1}{\mathbf{w}_{\kappa}}} \mathbf{z}_{\kappa} \mathbf{z}^{\intercal}_{\kappa} 
+ \left(- \sum^{K-1}_{\kappa=1} \mathbf{z}_{\kappa}\right) \EDITthree{\frac{1}{\mathbf{w}_{K}}}\mathbf{z}^{\intercal}_{K}
= \sum^{K-1}_{\kappa=1} \mathbf{z}_{\kappa} (\frac{\mathbf{z}_{\kappa}}{\EDITthree{\mathbf{w}_{\kappa}}}-\frac{\mathbf{z}_{K}}{\EDITthree{\mathbf{w}_{K}}})^{\intercal}
$$

which implies that \ours's modeled covariance matrix rank is upper bounded by $K-1$.
\end{proof}
\clearpage
\section{Dataset Details}
\label{section:dataset_details}

The \Traffic \ dataset, as mentioned, measures the occupancy rates of 963 freeway lanes from the Bay Area. The original data is at a 10-minute frequency from January 1st 2008 to March 30th 2009. The dataset is further aggregated from the 10-minute frequency into daily frequency with 366 observations. We match the sample procedure from previous hierarchical forecasting literature~\citep{taieb2019hierarchical_regularized_regression, rangapuram2021hierarchical_e2e}, and use the same 200 bottom level series from the 963 available. From these 200 bottom level series a hierarchy is randomly defined by aggregating them into quarters and halves of 50 and 100 series each, finally we consider the total aggregation. Table~\ref{table:traffic_data} describes the hierarchical structure.

\begin{table}[h]
\caption{San Francisco Bay Area Highway Traffic.\\
\scriptsize{\textsuperscript{*} The hierarchical structure is randomly defined.}}
\label{table:traffic_data}
\centering
\scriptsize
\begin{tabular}{lc}
\toprule
\textbf{\makecell{Geographical \\ Level \textsuperscript{*}}} 
& \textbf{\makecell{Series per \\ Level}} \\
\midrule
Bay Area    & 1     \\ 
Halves      & 2     \\ 
Quarters    & 4     \\ 
Bottom      & 200   \\ 
\midrule
Total       & 207   \\ 
\bottomrule
\end{tabular}
\end{table}

The \TourismL \ dataset \ contains 555 monthly series from 1998 to 2016, it is organized by geography and purpose of travel. The four-level geographical hierarchy comprises seven states, divided further into 27 zones and 76 regions. The categories for purpose of travel are holiday, visiting friends and relatives, business and other. This dataset has been referenced by important hierarchical forecasting studies like the one of the \MinT \ reconciliation strategy and the more recent \HierEtoE \ \citep{wickramasuriya2019hierarchical_mint_reconciliation, rangapuram2021hierarchical_e2e}. \TourismL \ is a grouped dataset, it has two dimensions in which it is aggregated, the total level aggregation and its four associated purposes. Table~\ref{table:tourisml_data} describes the group and hierarchical structures.

\begin{table}[h]
\caption{Australian Tourism flows.}
\label{table:tourisml_data}
\centering
\scriptsize
\begin{tabular}{lcccc}
\toprule
\textbf{\makecell{Geographical \\ Level}} 
& \textbf{\makecell{Series per \\ Level}} 
& \textbf{\makecell{Series per \\ Level \& Purpose}} & \textbf{Total} &  \\
\midrule
Australia   & 1     & 4    & 5     &  \\
States      & 7     & 28   & 35    &  \\
Zones       & 27    & 108  & 135   &  \\
Regions     & 76    & 304  & 380   &  \\
\midrule
Total       & 111   & 444  & 555   &  \\
\bottomrule
\end{tabular}
\end{table}

\clearpage
The \Favorita \ dataset, once balanced for items and stores, contains 217,944 bottom level series, in contrast the original competition considers 210,645 series. We resort for this balance because, for the moment the \GroupBU \ version of the \PMM \ requires balanced hierarchies for its estimation. In the case of the \Favorita \ experiment we consider a geographical hierarchy (93 nodes) conditional of each of grocery item (4,036). The hierarchy defines 153,368 new aggregate series at the item-country, item-state, and item-city levels. Table~\ref{table:favorita_data} describes the structure. 

Regarding the dataset preprocessing, we confirmed observations from the best submissions to the \href{https://www.kaggle.com/c/favorita-grocery-sales-forecasting/discussion/47582}{Kaggle competition}. Most holiday distances included in the dataset and covariates like oil production lack value for the \EDITtwo{forecasts}. The models did not benefit from a long history, filtering the training window to the 2017 year consistently produced better results.


\begin{table}[ht]
\caption{Favorita Grocery Sales.}
\label{table:favorita_data}
\centering
\scriptsize
\begin{tabular}{lcccc}
\toprule
\textbf{\makecell{Geographical \\ Level}} 
& \textbf{\makecell{Nodes per \\ Level}} 
& \textbf{\makecell{Series per \\ Level}} & \textbf{Total} &  \\
\midrule
Ecuador     &  1     & 4,036       & 4,036     &  \\
States      & 16     & 64,576      & 64,576    &  \\
Cities      & 22     & 88,792      & 88,792    &  \\
Stores      & 54     & 217,944     & 217,944   &  \\
\midrule
Total       & 93     & 371,312     & 371,312   &  \\
\bottomrule
\end{tabular}
\end{table}

\section{Poisson Mixture Size Ablation Study}
\label{sec:num_components_ablation_study}

\EDITtwo{As shown in Section~\ref{appen:correlation}, the job of \ours' mixture distribution goes beyond forecasting single time series but also modeling correlations across them too. Based on the covariance matrix expressiveness theoretical properties, it is reasonable to expect that the number of optimal components grows with the complexity of the modeled time series structure. 

In this ablation study, we empirically test these intuitions showing how the number of optimally selected \ours's mixture components grows with larger datasets. For the experiment, we measured the cross-validation performance of \ours' configurations as defined in Table~\ref{table:hyperparameters} explored automatically with \HYPEROPT~\citep{bergstra2011hyperopt} where we fix the number of mixture components.
}

\EDIT{
\begin{table*}[htp]
\scriptsize
\centering
\caption{\EDIT{Empirical evaluation of probabilistic coherent forecasts for different \ours-\GroupBU, varying the Poisson mixture size. Mean overall \emph{scaled continuous ranked probability score} (sCRPS). The best result is highlighted (lower measurements are preferred).}}
\label{table:empirical_evaluation_num_components}
{
\begin{tabular}{cc|ccccc}
\toprule
\textsc{Dataset} 
& \textsc{Level}     & $K=1$ & $K=10$ & $K=25$ & $K=50$ & $K=100$ \\
\midrule
\multirow{1}{*}{\Traffic}   
& Overall            & $0.1647 \pm 0.0009$               & $0.1435 \pm 0.0947$          & $\mathbf{0.0958\pm 0.0005}$      &  $0.1337\pm 0.0004$       &  $0.1261\pm 0.0037$                    \\
\midrule
\multirow{1}{*}{\TourismL}  
& Overall            & $0.1673 \pm 0.0052$               & $0.1380 \pm 0.0017$          & $\mathbf{0.1247 \pm 0.0025}$     &  $0.1284 \pm 0.0031$      &  $0.1251 \pm 0.0034$                   \\
\midrule  
\multirow{1}{*}{\Favorita}  
& Overall            &   $0.8390 \pm 0.0124$             & $0.4416 \pm 0.0152$          & $--$                             & $0.4204 \pm 0.0108$       &  $\mathbf{0.3758 \pm 0.0040}$          \\
\bottomrule
\end{tabular}
}
\end{table*}

Table~\ref{table:empirical_evaluation_num_components} reports the validation probabilistic forecast accuracy measured with sCRPS, across \Traffic, \TourismL, and \Favorita, for \ours-\GroupBU\ with different number of Poisson mixture components. For this experiment, we report the overall validation sCRPS averaged over four independent \HYPEROPT\ runs with twelve optimization steps and eight steps in the \Traffic\ dataset.

Table~\ref{table:empirical_evaluation_num_components}'s sCRPS measurements suggest that there is a \emph{bias-variance trade-off} controlled by the Poisson mixture size. When $K=1$, \ours-\GroupBU \ model corresponds to Poisson regression and treats each series as probabilistically independent, such model high-bias simple model produced \EDITtwo{forecasts} with the worst sCRPS. The \EDITtwo{forecast} accuracy improves as the number of Poisson components increases from $K=1$, but the accuracy begins to deteriorate beyond a certain threshold. We hypothesize that a small number of mixture components does not have enough degrees of freedom to describe the data, and too many mixture components lead to over-fitting the training data, resulting in large variance on the validation data.

We observed that the precise value of an optimal Poisson mixture components varies across the datasets. Larger datasets, or datasets with a complex time series correlation structure, appear to benefit from more flexible probability mixtures. \Traffic, our smallest dataset, produced optimal results with $K=25$ components, \TourismL, a medium-sized dataset, produced optimal results with $K=25$ components. Finally \Favorita\, our largest dataset, did not saturate even with the largest number of components we experimented with; we capped the choice of the number of mixture components at $K=100$ due to GPU memory constraints.


\begin{figure*}[ht]
\centering
\includegraphics[width=0.65\linewidth]{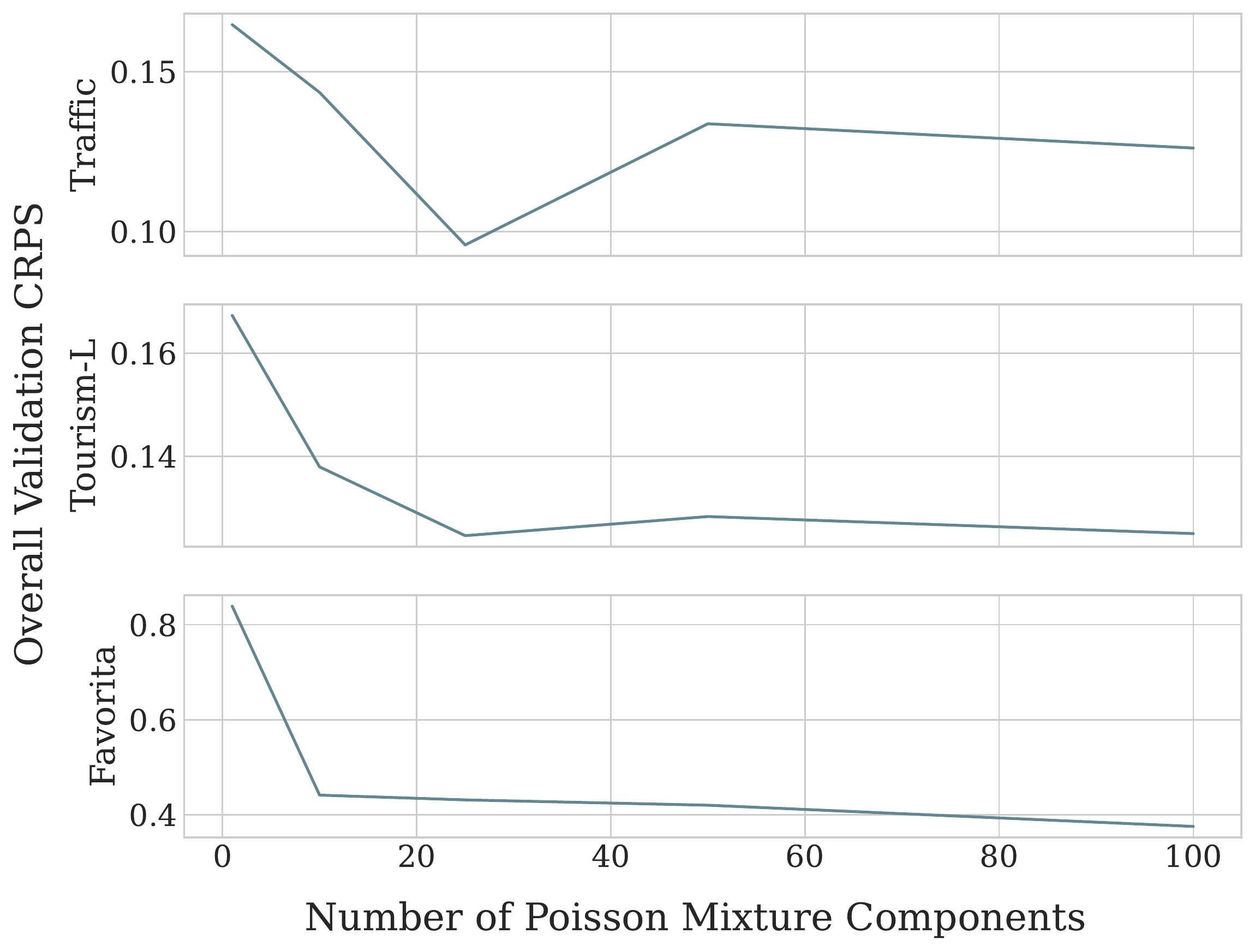}
\caption{\EDIT{Poisson Mixture size ablation study. We found interesting bias-variance trade-offs controlled by the number of mixture components, both \Traffic\ and \TourismL\ have an optimal value of 25 components, beyond which the sCRPS validation performance worsens. We observed a classic U-shaped pattern. In the case of \Favorita, the largest dataset, the validation sCRPS continued to improve through $K=100$.}} \label{fig:ablation_study_crps_vs_k}
\end{figure*}
}


                            






\clearpage
\section{Model Parameter Details}
\label{section:parameter_details}

\begin{table} [h]
\caption{\ourscomplete \ (\ours) architecture parameters configured once per dataset. \EDITtwo{These hyperparameters correspond to the first selection phase preceding the automatic optimization.\\
\tiny{\textsuperscript{*} 
SGD batch selection follows an ablation study considering values between \{2,4,8,16,32,64,100\}. We report the best validation batch size.
}
}}
\label{table:model_parameters}
\scriptsize
\centering
    \begin{tabular}{lcccc}
    \toprule
    \textsc{Parameter}                                    & Notation          & \multicolumn{3}{c}{Considered Values} \\
                                                          &                   & \textsc{\Traffic} & \textsc{\TourismL} & \textsc{\Favorita}  \\
    \midrule                                                                                                                                  
    SGD Batch Size\EDITtwo{\textsuperscript{*}} .                   & -                 & 4                 & 4                  & 100                 \\
    Activation Function.                                  & -                 & PeLU              & PeLU               & PeLU                \\
    Temporal Convolution Kernel Size.                     & $N_{ck}$          & $\{7\}$           & $\{2\}$            & $\{2\}$             \\
    Temporal Convolution Layers.                          & $N_{cl}$          & $\{3\}$           & $\{5\}$            & $\{5\}$             \\ 
    Temporal Convolution Filters.                         & $N_{p}$           & $\{10\}$          & $\{30\}$           & $\{30\}$            \\    
    Future Encoder Dimension.                             & $N_{f}$           & $\{50\}$          & $\{50\}$           & $\{50\}$            \\
    Static Encoder Dimension.                             & $N_{s}$           & $\{100\}$         & $\{100\}$          & $\{100\}$           \\
    Horizon Agnostic Decoder Dimensions.                  & $N_{ag}$          & $\{50\}$          & $\{50\}$           & $\{50\}$            \\
    Horizon Specific Decoder Dimensions.                  & $N_{sp}$          & $\{20\}$          & $\{20\}$           & $\{20\}$            \\
    Poisson Mixture Weight Decoder Layers.                & $N_{wdl}$         & $\{3\}$           & $\{4\}$            & $\{4\}$             \\ 
    Poisson Mixture Rate Decoder Layers.                  & $N_{rdl}$         & $\{2\}$           & $\{3\}$            & $\{3\}$             \\ 
    Poisson Mixture Components.                           & $N_{k}$           & $\{25\}$          & $\{25\}$          & $\{100\}$           \\
    GPU Training Configuration.                           & -                 & 2 x NVIDIA V100   & 2 x NVIDIA V100     & 4 x NVIDIA V100     \\
    \bottomrule
    \end{tabular}
\end{table}

\EDITtwo{As mentioned in Section~\ref{section:training_methodology}, for the overall hyperparameter selection, we used a standard two-stage approach where we fixed the architecture, the probability distribution to estimate (and implicit training loss), and a second stage where we optimized the architecture's training procedure.}

\EDITtwo{In the first stage, we carefully fixed the architecture and the probability distribution to estimate. The most important  heuristic guiding this selection was to increase the architecture's and probability's capacity for larger or more complex datasets.} To increase the network's capacity, we increased the number of convolution layers $N_{cl}$ and convolution filters $N_{p}$ as well as the mixture weight and rate decoder layers $N_{wdl}, N_{rdl}$. In particular, since \Traffic \ is the smallest dataset, we opted for a reduced model size and the number of Poisson mixture components to control the model's variability. Additionally, due to the dataset's strong weekly seasonality pattern, we adjusted the convolution kernel size to encompass seven days. \EDITtwo{We control the probability's capacity with the mixture size and SGD batchsize. In an ablation study similar to \ref{sec:num_components_ablation_study}} we found that for datasets with strong correlations like \Favorita \ and \TourismL\, maximizing the batch size with respect to GPU memory limitations resulted in better validation performance; for \Traffic, though the entire dataset could fit in memory at once, it was preferable to feed in subsets to allow the model to learn from different randomly sampled highway groups in each epoch. 

For the second hyperparameter selection stage, as reported in Section~\ref{section:training_methodology}, for each fixed architecture and probability we optimally explored its training procedure hyperparameters defined in Table~\ref{table:hyperparameters} using \HYPEROPT\ algorithm's Bayesian optimization  \citep{bergstra2011hyperopt}. The second phase only considers the optimal exploration of the learning rate, random initialization, and the number of SGD epochs, the selection is guided by temporal cross-validation signal obtained from the dataset partition introduced in Section~\ref{section:partition_preprocess}.

\clearpage
\onecolumn
\section{Qualitative Analysis of Poisson Mixture Rates}
\label{section:mixture_rates_composition}

\vspace{5mm}
\EDIT{
We use a histogram in Figure~\ref{fig:mixture_rates_composition} to visualize the distribution of Poisson Mixture rates learned by the \ours-\GroupBU\ method on the \Traffic, \TourismL, and \Favorita\ datasets. These rates correspond to those of the models from Table~\ref{table:empirical_evaluation}, for a base time series and a single time step ahead in the forecast horizon. In the histogram, we use the learned mixture weights $w_{k}$ in the vertical axis to account for the probability of each Poisson rate $\lambda_{k}$ represented in the horizontal axis.

Exciting patterns arise from visualizing the learned Poisson Mixture rates and their weights. First, the \ours\ likelihood is capable of modeling zero-inflated data in the case of the \EDITtwo{\Favorita\ bottom} level series; as we can see from the probability accumulation around zero, this may explain its superior performance on \Favorita\ bottom level series that tends to be sparse as previously shown in Figure~\ref{fig:train_methodology}. Second, all the Poisson rate distributions show multiple modes, a quality that cannot be replicated by a Gaussian, uni-modal, and symmetric distribution. This observation further motivates using a flexible distribution capable of better modeling the underlying data generation processes. Third, the distribution of Poisson rates in the \TourismL\ dataset can overcome the limitations of simple Poisson distributions that tend to have collapsed variances for aggregated series due to their scale since the variance of the series can still be correctly modeled by the variance of the Poisson rates.

\begin{figure*}[ht]
\centering
\includegraphics[width=0.65\linewidth]{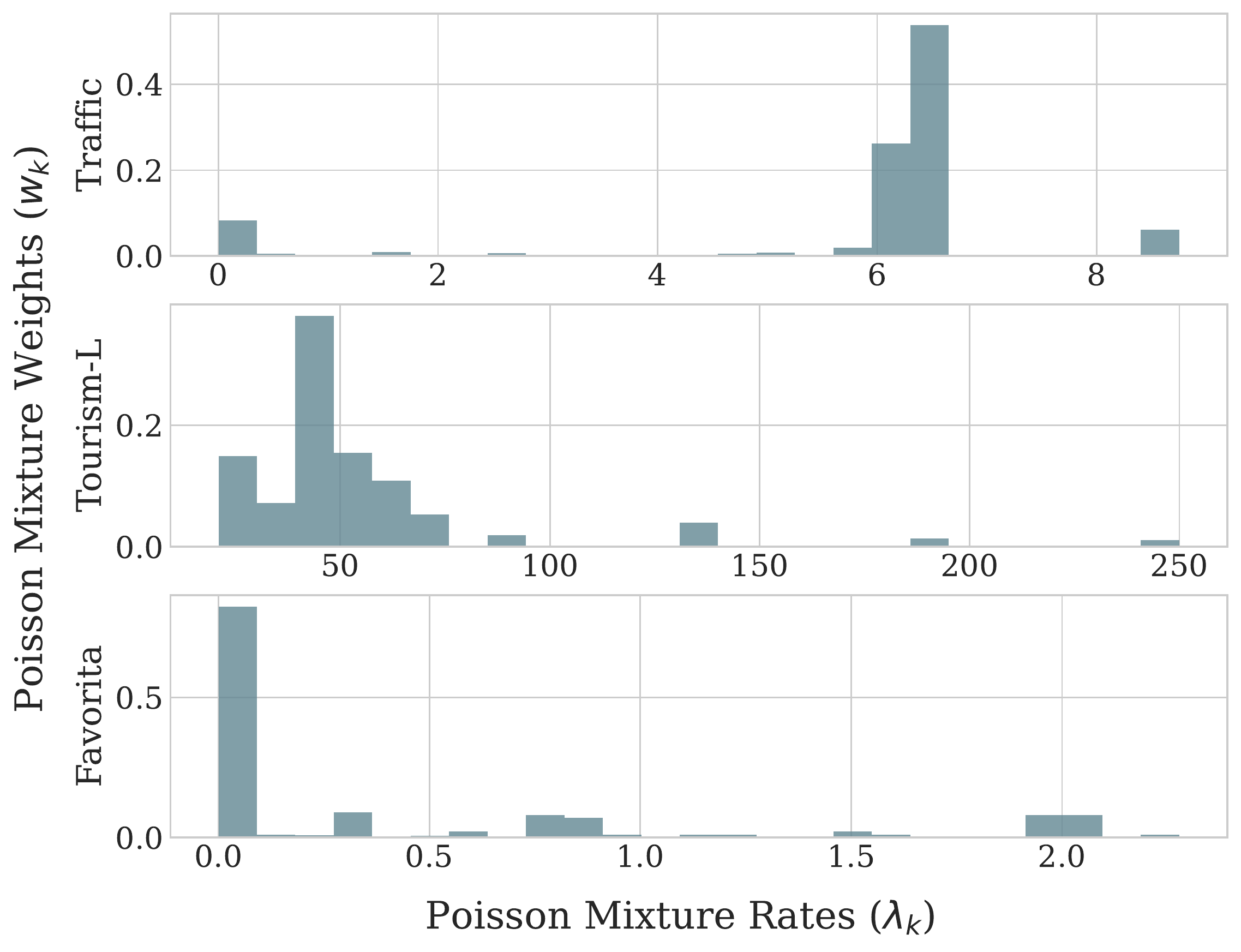}
\caption{\EDIT{Poisson Mixture rates distribution for the \ours-\GroupBU\ model reported in Table~\ref{table:empirical_evaluation}. A single bottom-level series and a time step ahead in the forecast horizon is considered. The mixture distribution is capable of flexibly modeling multimodal processes, overcome Poisson regression's limitations for aggregated data and model zero inflated processes for disaggregated data. These qualities make it exceptionally useful for hierarchical forecasting tasks.} } \label{fig:mixture_rates_composition}
\end{figure*}
}










\end{document}